\def\R{\mathbb{R}}
\newcommand{\x}{\mathbf{x}}
\newcommand{\X}{\mathbf{X}}
\newcommand{\y}{\mathbf{y}}
\newcommand{\f}{\mathbf{f}}
\newcommand{\K}{\mathbf{K}}
\newcommand{\s}{\mathbf{s}}
\newcommand{\Rset}{\mathbb{R}}
\newcommand{\Xset}{\mathbb{X}}
\newcommand{\Nset}{\mathbb{N}}
\newcommand{\Sset}{\mathbb{S}}
\newcommand{\rank}{{\text{rank}}}
\newcommand{\esp}{\mathbb{E}}
\newcommand{\+}{_{n+1}}
\newcommand{\nlaw}{\mathcal{N}}
\newcommand{\MS}{\boldsymbol{\Sigma}_\mathbf{ff}}
\newcommand{\vm}{{\boldsymbol{\mu}_\mathbf{f}}}
\newcommand{\KL}{\operatorname{KL}}
\def\nn{\nonumber}
\newtheorem{lemma}{Lemma}
\newtheorem{theorem}{Theorem}
\newtheorem{assumption}{Assumption}
\begin{document}

\twocolumn[

\aistatstitle{Ordinal Bayesian Optimisation}

\aistatsauthor{ Victor Picheny \And Sattar Vakili \And  Artem Artemev }

\aistatsaddress{ Prowler.io\\Cambridge, UK\\
\{victor, sattar, artem\}@prowler.io } ]

\begin{abstract}
Bayesian optimisation is a powerful tool to solve expensive black-box problems,
but fails when the stationary assumption made on the objective function is strongly violated,
which is the case in particular for ill-conditioned or discontinuous objectives.
We tackle this problem by proposing a new Bayesian optimisation framework that only considers the ordering of variables, both in the input and output spaces, to fit a Gaussian process in a latent space. By doing so, our approach is agnostic to the original metrics on the original spaces. We propose two algorithms, respectively based on an optimistic strategy and on Thompson sampling. For the optimistic strategy we prove an optimal performance under the measure of regret in the latent space. We illustrate the capability of our framework on several challenging toy problems.
\end{abstract}

\section{Introduction}
We address typical Bayesian optimisation (BO) problems, of the form:
\begin{eqnarray*}
 \min_{x \in \Xset}  g(\x),
\end{eqnarray*}
with $\Xset \in \Rset^d$ is usually a bounded hyperrectangle, $g:\Rset^d \rightarrow \Rset$ is a scalar-valued objective function, available only through noisy observations $y_i = g(\x_i)+\epsilon_i$.

BO is established as a strong competitor among derivative-free optimisation approaches, in particular for computationally expensive (low data regime) problems. 
In BO, non-parametric Gaussian processes (GPs) provide flexible and fast-to-evaluate surrogates of the objective functions. Sequential design decisions, so-called acquisitions, judiciously balance exploration and exploitation in search for global optima, leveraging the uncertainty estimates provided by the GP posterior distributions 
(see \citet{mockus1978application,jones1998efficient} for early works or 
\citet{shahriari2015taking} for a recent review).

One of the weaknesses of vanilla BO lies in the underlying assumption that the objective function is a realisation of a GP: when this assumption is strongly violated, the GP model is weakly predictive and BO becomes inefficient. Two classical examples where BO fails are
ill-conditioned problems, when the objective function has strong variations on the domain boundaries but is very flat in its central region (or conversely), and non-Lipschitz objectives, for instance with local discontinuities. High conditioning is typical in ``exploratory'' optimisation problems, when the parameter space is initially chosen very large. Discontinuities are frequent in computational fluid dynamics problems for instance, where a small change in the parameters results in a change of physics (e.g. laminar to turbulent flow), which creates a discontinuity in the objective.

One remedy to this problem is to add a warping function, either on the output space $y$ \citep{snelson2004warped} or on the input space $\Xset$ \citep{snoek2014input,marmin2018warped}. However, warping usually applies only to continuous functions, and rely on parametric forms, which need to be chosen beforehand and may not adapt to the problem at hand.
A popular alternative is to rely on hierarchical partitions of the input space (assuming stationarity only within each part): see for instance \citet{gramacy2008bayesian,fox2012multiresolution}, but those approaches are in general efficient in small dimension
and with relatively large datasets.

In this work, we propose to apply an ``ordinal'' warping to both input and output data, 
that is, a transformation that only preserves the ordering of the variables. 
A classical (latent) GP model is then fitted to the transformed dataset.
In the output space, this amounts to performing ordinal regression 
using a variational formulation \citep{chu2005gaussian}. 
In the input space, we show that this amounts to defining a large optimisation problem,
which can be solved using standard descent algorithms.

We then study how this model can be used to perform Bayesian optimisation, with minimal use of the original problem metrics. We show that this can be achieved by combining classical acquisition schemes such as upper confidence bound or Thompson sampling and tree search.
Although BO has already been applied to problems with qualitative objectives \citep{gonzalez2017preferential}, we believe that our approach is the first 
that is agnostic to any metric in the input and the output spaces. 

There are a small number of works characterizing the performance of BO on GPs under optimistic acquisition functions.
All these works however consider well behaved GPs where, in particular, the so called information gain is bounded nicely~(see Sec.~\ref{Sec:Analysis} for more detail). In~\cite{srinivas2010gaussian}, an $O(\sqrt {T\gamma})$ upper bound on cumulative regret was shown for GP-UCB a confidence bound based approach where $\gamma$ is an upper bound on information gain. \cite{Chowdhury2017bandit} and~\cite{Javidi} improved the constants in the regret of confidence based policies. Inspired by these works we characterize the regret performance of the proposed confidence bound based policy in the latent space. 

Our approach is illustrated on several toy problems, showing that it is able to optimise severely ill-conditioned and discontinuous functions.

\section{Model}\label{Sec:Model}
\subsection{Definitions and main hypothesis}

\paragraph{Ordinal warping for discrete sets}

We propose to use as a warping any transformation that preserves the ordering of a finite vector with $n$ real values. Without loss of generality, given a set $\{u_1, \ldots, u_n\}$, 
we can write such transformation in the form:

\begin{eqnarray}
 \gamma_n: \Rset &\rightarrow& \Rset \nonumber \\      
u_j &\rightarrow& s_j = \sum_{i=1}^{\rank(u_j)} \delta_i,\label{eq:warp}
\end{eqnarray}
where $\rank()$ denotes the rank function, 
$\rank(u) := \sum_{j=1}^n \texttt{1}_{u_j \leq u}$,
and $\{\delta_1, \ldots, \delta_n\} \in \Rset^{*+}$ are some strictly positive values. 
It is straightforward that $\gamma_n$ is a bijection, moreover $\rank(u_j) = \rank(s_j)$, 
and choosing $\delta_1 = \ldots = \delta_n=1$ results with the rank transformation: $s_j = \rank(u_j)$.

\paragraph{Latent GP model}

Let us assume that we have a set of observations of the form $\{\X_n, \y_n\} = \{\x_1, y_1\}, \ldots, \{\x_n, y_n\}$. 
We define one ordinal warping for $\y_n$, $\gamma_n^y$ (with an underlying set $\delta_1^b, \ldots, \delta_n^b$) and 
$d$ warpings for each dimension of $\X_n$, $\gamma_n^{1}, \ldots, \gamma_n^{d}$ (each dimension $j$ with an underlying set $\delta_1^j, \ldots, \delta_n^j$). 
For each $1 \le i \le n$, we denote:
\begin{eqnarray}
    f_i = f(\s_i) &=& \gamma_n^y(y_i), \\
    \s_i &=& \left[\gamma_n^{1}(x^d_i), \ldots, \gamma_n^{d}(x^d_1) \right] = \gamma_n^\x(\x_i).
\end{eqnarray}

The overall idea is that both $\y_n$ and $\X_n$ can be mapped respectively through $\gamma_n^y$ and $\gamma_n^\x$ to $\f(\mathbf{S}_n) = \{f(\s_1), \ldots, f(\s_n)\}$ and $\mathbf{S}_n = \{\s_1, \ldots, \s_n \}$ such that:
\begin{equation}
    \f(\mathbf{S}_n) \sim \nlaw \left(0, \mathbf{K}_{\s_n \s_n} \right),
\end{equation}
with $\mathbf{K}_{\s_n \s_n} = \left [k \left(\s_i, \s_j \right) \right]_{1 \le i,j \le n}$ and 
$k(\cdot, \cdot)$ a stationary covariance kernel, for instance of the exponential or Mat{\'e}rn classes \citep{williams2006gaussian}.
The input warping is illustrated in Figure \ref{fig:warping_explained}.

\begin{figure}[ht]
 \centering
  \includegraphics[trim=0mm 0mm 0mm 10mm, clip, width=.9\linewidth]{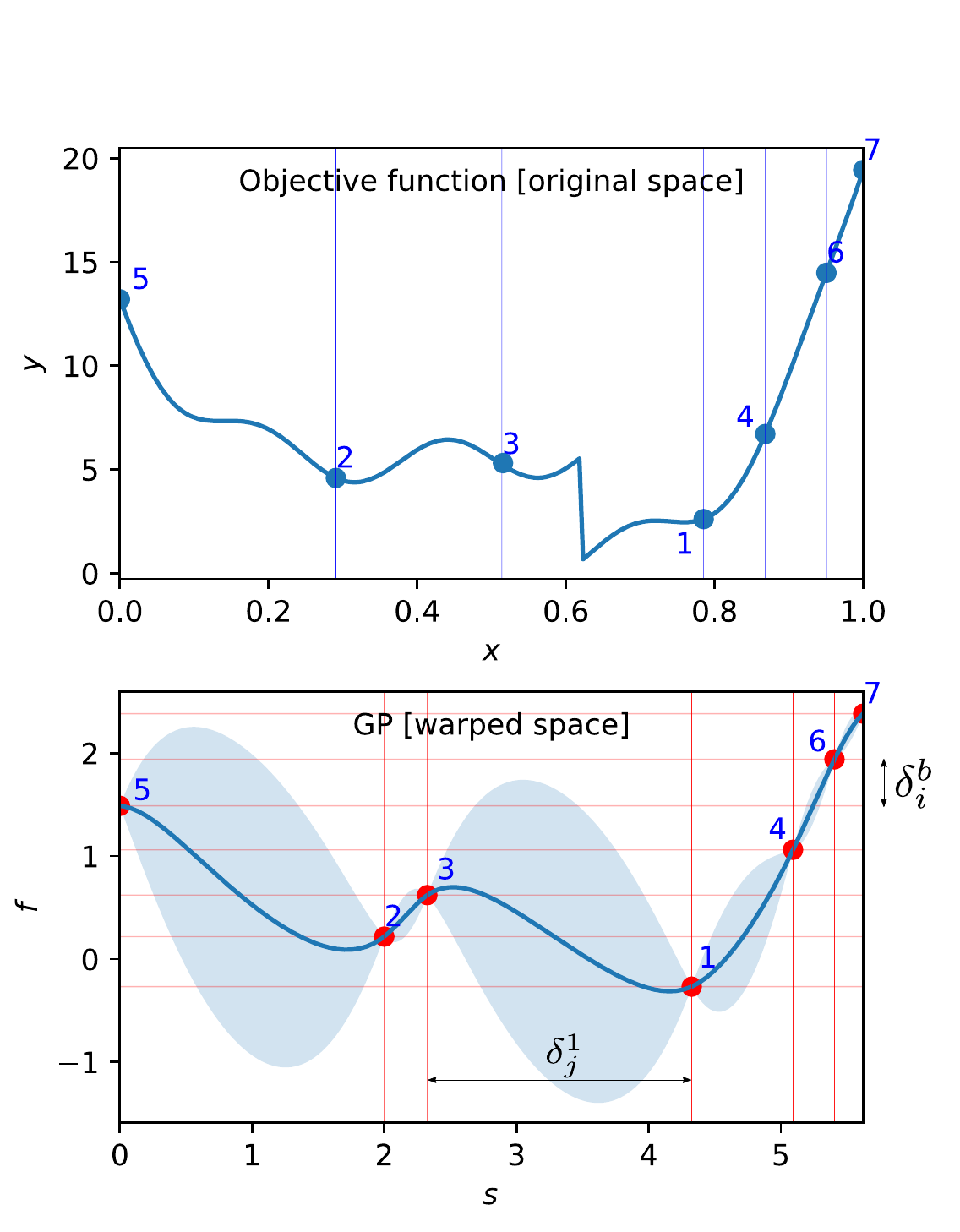}
  \caption{Original and warped spaces. Notive how the ordering is preserved from both $x$ to $s$ and $y$ to $f$.}\label{fig:warping_explained}
\end{figure}

Intuitively, such an approach allows us to tackle the problem where the user only returns pairwise comparisons, such as ``$y_1$ is smaller than $y_2$
and $x_1$ is larger than $x_2$'', which makes it insensitive to scales.
By considering a stationary GP for $f$, we report all the modeling difficulty to the warping step.
Note that although such warpings may be very difficult to infer over continuous spaces, 
we only consider here discrete sets as in Equation \ref{eq:warp}.

\subsection{Learning $f, \gamma_n^\x$ and $\gamma_n^y$ using variational inference}\label{sec:learning}

In the classical GP regression framework, observations $y_i$'s are assumed to correspond 
to evaluations of a latent GP corrupted by Gaussian noise, $y_i = f(\s_i) + \varepsilon_i$.
By doing so, the likelihood function $p(y_i | f_i)$ is set as Gaussian, which allows 
to apply the classical Bayes' rule and obtain a posterior distribution on $\f(\mathbf{S}_n)$:
\begin{equation*}
    p(\f_n|\y_n) = \frac{p(\f(\mathbf{S}_n))\prod^{n}_{i=1}p(y_{i}|f(\s_i))}{\int p(\f(\mathbf{S}_n))\prod^{n}_{i=1}p(y_{i}|f(\s_i))\,d\f(\mathbf{S}_n)}.
\end{equation*}
As all quantities are Gaussian, the posterior distribution can be expressed in closed form.

In the non-conjugate case, exact computation is not tractable and one must resort to approximations.
Variational Inference (VI), which consists in minimising the Kullback-Leibler
divergence between the approximate and the true posterior, has proven to be an effective approach 
in this context. 
We show in the following that applying ordinal warping to the outputs amounts to choosing a non-conjugate likelihood.
We then express the corresponding classical VI problem formulation, which amounts to optimising a lower bound on the marginal log-likelihood. Finally, we show how we can incorporate the parameters of the input warping into the VI problem and learn the warping parameters along with the VI ones.

\paragraph{Output ordinal warping using ordinal regression likelihood}
We follow the model of \citet{chu2005gaussian}, that specifies bins for each observation.
Define $b_0 = -\infty$, $b_1$ is an arbitrary real value, $\forall i \in [1, n-2]$, $b_{i+1} = b_i + \delta^b_i$, and $b_{n} = +\infty$.
Assuming that the observations are in increasing order ($f_1 \leq f_2 \leq \ldots$),
the likelihood functions can be expressed as:
\begin{equation}
 p(y_i | f(\s_i)) = \Phi\left(\frac{b_i - y_i}{\sigma} \right) - \Phi\left(\frac{b_{i-1} - y_i}{\sigma}\right),
 \label{eq:ordinal-likelihood}
\end{equation}
with $\Phi$ the standard Gaussian cumulative distribution function.
The term $\sigma$ corresponds to a (small) noise in the latent functions.

\paragraph{ELBO}
We now follow the classical Variational GP (VGP) framework \citep{titsias2009,hensman2013}.
To compute the data likelihood, we only need the marginal posterior distribution of the GP at the (warped) 
inputs $\{\s_1, \ldots, \s_n\}$, denoted as $\f_n = \{f(\s_1), \ldots, f(\s_n)\}$. 
We propose an approximate posterior in which we directly parametrise the distribution of function values at the inputs as a multivariate normal, $q(\f_n) = \nlaw(\vm, \MS)$
with mean $\vm \in \Rset^n$ and covariance $\MS \in \Rset^{n\times n}$,
where $\vm$ and $\MS$ are optimisation parameters. 

Conditioned on $q(\f_n)$ we obtain the approximate posterior GP where the mean $\mu_n(\cdot)$ and the covariance $k_n(\cdot, \cdot)$ can be calculated in closed form:
\begin{eqnarray*}
\label{eq:qf}
  \mu_n(\cdot) &=& \mathbf{k}_{\s_n}^\top(\cdot) \mathbf{K}_{\s_n \s_n}^{-1} \vm \quad\text{and} \\
  k_n(\cdot, \cdot) &=& k(\cdot, \cdot) + \\
  &&\mathbf{k}_{\s_n}^\top(\cdot) \mathbf{K}_{\s_n \s_n}^{-1}(\MS - \mathbf{K}_{\s_n \s_n}) \mathbf{K}_{\s_n \s_n}^{-1}\mathbf{k}_{\s_n}(\cdot),
\end{eqnarray*}
where $\mathbf{k}_{\s_n}(\cdot) := \left[k(\s_i, \cdot)\right]_{i=1}^n \in \Rset^n$.

With this approximation in place we can set up our model's optimisation objective, which is a lower bound on the log marginal likelihood \citep[ELBO,][]{hoffman2013}, equal to
\begin{equation*}
       \mathcal{L} = \sum_{i=1}^n \esp_{q(\f(\mathbf{S}_n))}
        \big[ \log p(y_i | f_i) \big]
        - \KL\big[q(\f(\mathbf{S}_n)) || p(\f(\mathbf{S}_n))\big], \label{eq:elbo}
\end{equation*}
where $p(\f(\mathbf{S}_n)) = \nlaw(0, \mathbf{K}_{\s_n \s_n})$ and $p(y_i | f(\s_i))$ is the ordinal likelihood \eqref{eq:ordinal-likelihood}.

In practice, the expectation $\esp_{q(\f_n)}$ cannot be evaluated analytically, but as the likelihood factorises over data points this is just a one-dimensional integral, which can easily be computed numerically using Gauss--Hermite quadrature.

\paragraph{Optimisation} 
Now, $\mathcal{L}$ is optimised with respect to three sets of variables:
a) the inducing variables, $\vm \in \Rset^n$ and $\MS \in \Rset^{n \times n}$;
b) the $n-1$ likelihood parameters, $\delta^b_2, \ldots, \delta^b_{n-1} \in \Rset^+$ and $\sigma \in \Rset^+$;
c) the $d \times (n-1)$ input warping parameters, as each warping is defined using $n$ $\delta \in \Rset^{*+}$ values, but we can set arbitrarily $\delta_0^1 = \ldots = \delta_0^d = 0$.

Note that since the distances between points are set by the $\delta_i^j$'s and
the amplitude of the response is set through the $b_i$'s, we can define $\MS$ using a stationary kernel with unit variance and lengthscale.

To ease the problem resolution in practice, we restrict $\MS$ to be diagonal and add a set of boundary constraints when solving the ELBO optimisation problem. Each value $\delta^b_i$ and $\delta_i^j$ are bounded between a (small) strictly positive value and a maximum. As we use a unit kernel variance and lengthscale, we can set those maxima to values related to resp. the amplitude of the GP and such that the covariance between two points distanced by $\delta_{max}$ is close to zero). Note that during BO, these bounds can be reduced to bound the maximum variation of $s_j$ between two consecutive steps, as we detail in Section 4. In our implementation, this problem is solved by stochastic gradient descent, leveraging automatic differentiation tools. The bound constraints are handled using logistic transformations.

\section{Bayesian optimisation}

\subsection{Acquisitions on latent and original spaces}
Standard BO algorithms work as follow. An initial set of experiments $\{\X_{n_0}, \y_{n_0}\}$
is generated, typically using a space-filling design \citep{pronzato2012design} over $\Xset$,
and a GP model is trained on this dataset. Then, an \textit{acquisition rule} is applied repeatedly,
that consists of evaluating $y$ at the input $\x$ that maximises an \textit{acquisition function}.
Every time a new data point is acquired, the GP posterior distribution is updated to account for it.

The acquisition function is based on the GP distribution and balances between exploration (high GP variance) and exploitation (low GP mean). Typical acquisitions include 
\textit{Expected improvement} \citep[EI,][]{jones1998efficient} and \textit{upper confidence bound} \citep[UCB,][]{srinivas2010gaussian}.

In our case, given that $f$ is a stationary GP, it is direct to predict the posterior distribution $f(\s_{\text{new}}) \sim \nlaw(\mu_n(\s_{\text{new}}), k_n(\s_{\text{new}}, \s_{\text{new}}))$ for a new value $\s_{\text{new}}$. Hence, classical acquisition functions apply and it is straightforward to select a point $\s_{\text{new}}$ to acquire.

However, the mapping $\gamma_n^\x$ is only defined from $\X_n$ to $\mathbf{S}_n$,
and it is not possible to find the $\x_{\text{new}}$ that corresponds to $\s_{\text{new}}$
without 
either 1- creating a new ordinal warping $\gamma\+$, which requires the value of $f(\x_{\text{new}})$ 
or 2- generalising the warpings, say by linear interpolation\footnote{e.g. if $\s_{\text{new}}=\frac{\s_1 + \s_2}{2}$ then we choose $\x_{\text{new}}=\frac{\x_1 + \x_2}{2}$.}, which contradicts our metric-free principle.

Instead, we leverage the fact that the ordinal input warping implies a one-to-one mapping between hyper-rectangle cells,
i.e.  $\s_{\text{new}} \in \Omega_{\s_{\text{new}}} \subset \Sset \Leftrightarrow \x_{\text{new}} \in \Omega_{\x_{\text{new}}} \subset \Xset$, which are determined 
by the rank of $\x_{\text{new}}$ with respect to the existing $\x_1, \ldots, \x_n$ (see Figure \ref{fig:ts_lcb_explained}). 

Hence, by choosing 
$\x_{\text{new}}$ we guarantee to have $\s_{\text{new}}$ within given bounds, 
but by being ``truly agnostic'' with respect to the metrics of the original space, 
we cannot be more precise about the location of $\s_{\text{new}}$.

Then, instead of using acquisitions functions that return a value for a new point, 
we need functions that evaluate cells.
We show in the following how to adapt two acquisition strategies, LCB and Thompson sampling, 
to this framework.

\subsection{Lower confidence bound}
The GP-UCB strategy of \citet{srinivas2010gaussian} uses an optimistic upper confidence bound for $f$ aiming at a maximization problem. Similarly, we use a lower confidence bound as follows for our minimization problem:
\begin{equation*}
 lcb(\s) = \mu_n(\s) - \beta_n \sqrt{k_n(\s, \s)},
\end{equation*}
where $\beta_n \in \Rset^+$ is a quantity that generally grows with $n$. 
Assume now that $\x \in \Omega_x \subset \Xset$.
The LCB (somehow twice optimistic) would become:
\begin{equation}\label{eq:LCB}
 LCB(\Omega_x) = \min_{s \in \Omega_s} lcb(\s).
\end{equation}

\subsection{Thompson sampling (TS)}
The principle of TS is to choose actions in proportion to the probability that they are optimal.
For GPs, a simple way to do so is to generate a sample from the posterior of $f$ and pick its minimiser as the next point to evaluate. This requires however to discretise the input space (using a fine Cartesian grid or a low discrepancy sequence).

The main difficulty of the vanilla TS-GP is the discretisation of the input space. 
This problem is removed here as the action to take is to \emph{choose the best cell}.
We may choose a cell according to the probability that it maximises the expected reward.
This can be achieved by repeatedly 1- sampling one random $\s_i$ in each cell $i$, 2- sampling jointly $f(\s_1), \ldots, f(\s_m)$, and 3- recording which cell is optimal. Then, the cell where a new sample is drawn is chosen with probability proportional to the number of times it was optimal.

\subsection{Domain decomposition}
With $n$ points, the ordinal warping naturally defines a decomposition of the search space into $(n-1)^d$ cells.
Our first strategy is to consider the acquisition values over all of those cells. We call ``exhaustive'' such approach.
Although this works without issue in small dimension ($d\leq2$) and low data regime (say, $n \leq 100$), 
the number of cells grows very quickly with the number of added observations and dimension, 
and computing the acquisition value of each cell can rapidly become impractical.

An alternative is to use a hierarchical partitioning, that is, starting from the initial exhaustive decomposition
induced by the first $n_\text{init}$ points, 
to only create new cells by dividing the cell where the new observation is generated.
With this strategy, only $2^d - 1$ cells are added for every new observation, leading to a total of 
$(n_\text{init} - 1)^d + (n - n_\text{init}) (2 ^d - 1)$.
We refer to this approach as ``tree search''.
Both approaches are illustrated in Figure \ref{fig:cells}.

\begin{figure}[ht]
 \centering
  \includegraphics[trim=10mm 10mm 10mm 10mm, clip, width=.9\linewidth]{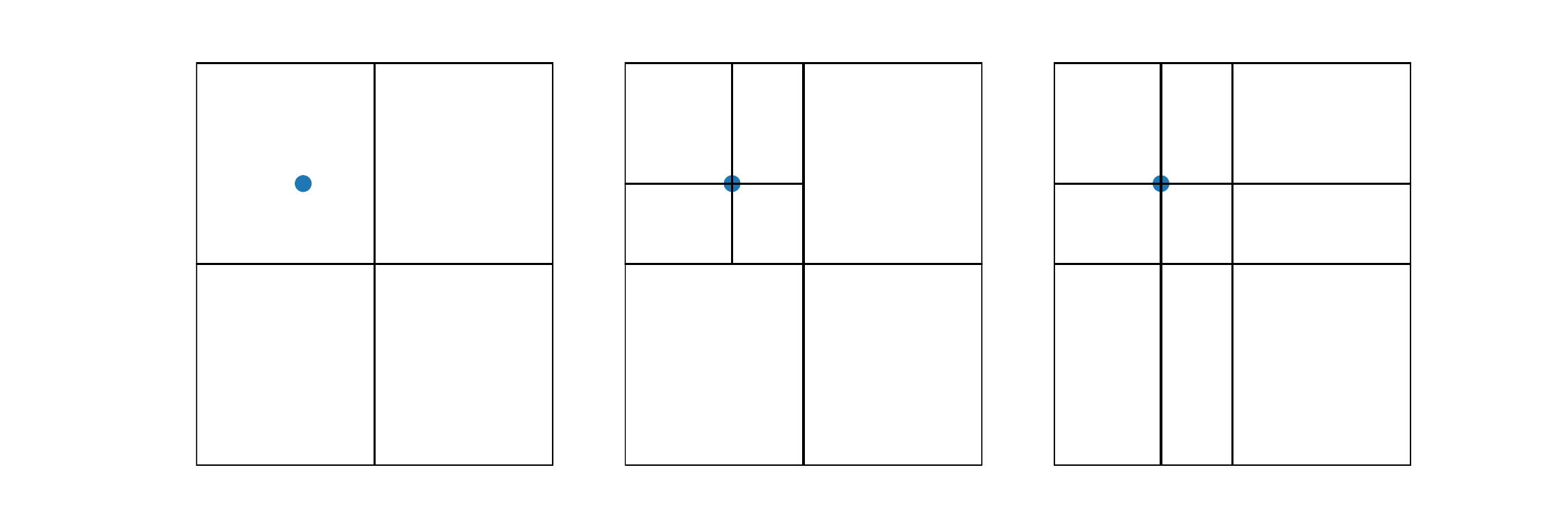}
  \caption{Domain splitting when a new observation $\s_{new}$ (blue dot) is added to an initial four-cell partition: 
  tree-search (middle, leading to seven cells), exhaustive (right, nine cells).}\label{fig:cells}
\end{figure}

\subsection{Algorithms}
The pseudo-code of the Thompson sampling algorithm is given in \cref{alg:ts}. 
The LCB algorithm has a simpler but similar structure, as the steps 9:13 are replaced by computing the LCB criterion of \cref{eq:LCB}, and the cell is simply chosen as the one that maximises the LCB.
The acquisition steps are illustrated in Figure \ref{fig:ts_lcb_explained}.
On both cases, the warping is updated every time a new input point is observed to account for the addition of a new pair $\{\s_{n+1}, f_{n+1}\}$,
which is done by introducing a new $\delta_b$ value and a set of $d$ $\delta_i^j$ values.
The variational parameters $\vm$ and $\MS$ are augmented, resp. with one value and with one row and column.
Then, the ELBO is trained again, which updates all the parameters listed in Section \ref{sec:learning}.

\begin{figure*}[ht]
 \centering
  \includegraphics[trim=29mm 0mm 25mm 4mm, clip, width=\linewidth]{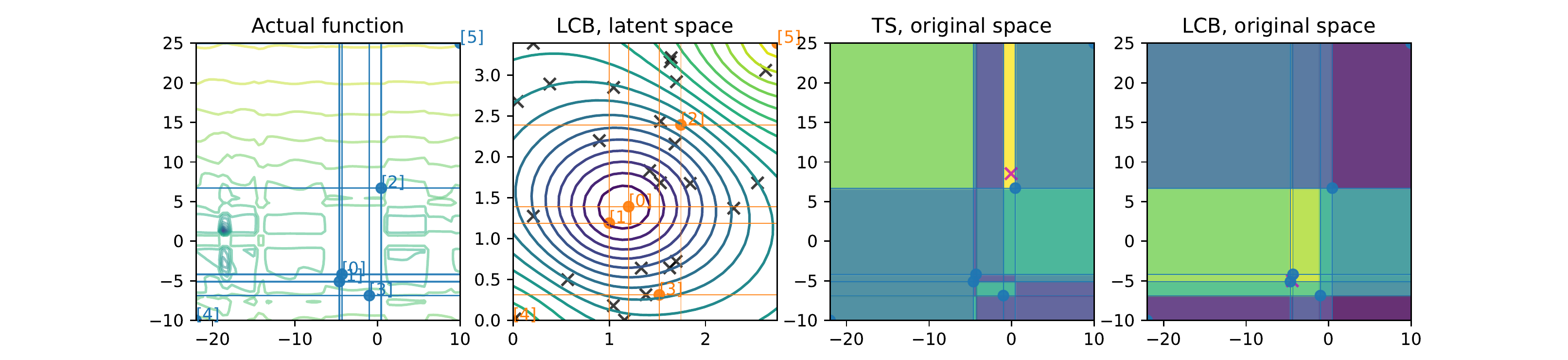}
  \caption{From left to right: 
  A) Contour lines of the objective and initial $\x_n$ values (blue dots). 
  The numbers correspond to the rank of the observations.
  B) Contour lines of the lcb in the latent space, along with the $s_n$ values (orange dots).
  Sampled minima for TS are shown in black crosses.
  C) Probability of containing the minimum by cell in the original space
  D) LCB by cell in the original space.
  New proposed samples are shown in purple crosses.}\label{fig:ts_lcb_explained}
\end{figure*}

\begin{algorithm}[ht]
    \centering
    \caption{Pseudo-code for the Thompson sampling with tree search}\label{alg:ts}
    \begin{algorithmic}[1]
	\State Choose $n_\text{init}$, $n_\text{run}$
    \State Sample $X_{n_\text{init}-2}$ uniformly on $\Xset$. 
    \State Add extremes: $X_{n} =\{X_{n_\text{init}-2}, x_{\min}, x_{\max} \}$
    \State Evaluate $y(X_{n})$.
    \State Create $(n-1)^2$ cells using $\X_{n}$.
    \State Initialize warping and variational parameters
    \State Create and train GP model for $f$ by optimizing the ELBO
    \For{$i \gets n_\text{init} + 1$ to $n_{run}$}
      \State Update cells of $\Sset$ according to $\s_n$ values.
      \For{$k \gets 1$ to $M$}
        \State Generate $\{s^{TS}_1, \ldots, s_{n_\text{cells}}^{TS}\}$ sampling points in $\Sset$ (each $s^{TS}_i$ randomly drawn inside a different cell).
        \State Draw one sample of $f(\{s^{TS}_1, \ldots, s_{n_\text{cells}}^{TS}\})$
        \State Record which cell contains the minimiser of the sample
      \EndFor
      \State Choose one cell in $\Xset$ with probability according to the number of times it contained the minimiser.
      \State Generate a random $x_\text{new}$ value inside this cell
      \State Evaluate $y(x_\text{new})$
      \State Split the cell into 4 new ones.
     \State Update warping and variational parameters
     \State Optimize the ELBO
     \EndFor
    \end{algorithmic}
\end{algorithm}

\section{Analysis}\label{Sec:Analysis}
Our method is designed agnostic to the metric in the original space. Thus, regret in original space is not well-defined in the scope of the (very general) formulation of this paper. Obtaining results in the original space is not out of reach, but implies making substantially restricting  (e.g., Lipschitz) assumptions on the original function, which in some sense defeats the purpose of this work. Hence, we focus on the latent space to show the convergence of the method.

Our analysis is inspired by the analysis of GP-UCB in~\cite{srinivas2010gaussian}. In~\cite{srinivas2010gaussian}, the observation locations are static, while in our case they vary over time. Our problem thus has an additional difficulty which requires new developments. Specifically, we use a bound on the amount of variation in the location of observations to establish new bounds on information gain that results in regret bounds for a lcb method with particular dynamics of observation points. This may be a valuable contribution for other contexts with dynamic data sets.

The values of observation points ($\s_j$) vary in each iteration with injecting new observations as a result of the update of warping parameters described in Sec.~\ref{Sec:Model}. Specifically, thus far we have simplified the notation $\s_j^{(n)}$ of the value of observation points in the latent space by removing the superscripts corresponding to the iterations. The superscript specifies the number of observations used in determining the warping parameters. Thus, $s^{n}_j$ denotes the value of $j$-th observation when $n$ observations are used in determining the warping parameters. Notice that $s_n^{n-1}$ denotes the location of $n$-th observation based on the warping determined by the $n-1$ previous observations, while $s_n^n$ denotes its location after updating the warping with this nth observation.

The performance measure specified as regret defined as the cumulative loss in $f$ compared to its optimum value. Let $\s^*\in\text{argmin}_{\Sset}f(\s)$. Define 
\begin{eqnarray}
R(N,f,lcb) = \sum_{n=1}^N f(\s_{n}^{(n-1)}) - f(\s^*),
\end{eqnarray}
where $\s_{n}^{(n-1)}$ is the new observation point at iteration $n$ conditioned on the previous observation points $\{\s_j^{(n-1)}\}_{j=1}^{n-1}$. The regret order determines the rate of convergence to optimum value of $f$. Not only a sublinear regret guarantees the convergence to the optimum value, the regret measure also accounts for the intermediate values of the observations and makes sure the overall loss is not too large.  

Regret can be bounded in terms of the maximum amount any algorithm could learn about the objective function. \citet{srinivas2010gaussian} excellently characterized this intuition using an information theoretic measure referred to as information gain $I$ whose value depends on the observation points and the kernel function. Specifically,
\begin{eqnarray}\label{eq:Infogain}
I_n(\{\s_{j}\}_{j=1}^n) = \frac{1}{2}\log |\mathbb{I}_n+\sigma^{-2}\K_{n}|,
\end{eqnarray}
where $\K_n =[\mathbf{k}_n(\s_i,\s_j)]_{i,j\in[n]}$ and $\mathbb{I}_n$ is the identity matrix. 
The regret upper bound is established based on the following two lemmas. In Lemma~\ref{Lemma:InstReg}, instantaneous regret at iteration $n$ is upper bounded by the variance $\sigma^2_{n-1}(\s_n^{(n-1)})$ of the new observation point at iteration $n$ up to constants. In Lemma~\ref{Lemma:VnUB}, the cumulative value of such variances defined as 
\begin{eqnarray}\label{eq:Vn}
V_N = \sum_{n=1}^N \sigma^2_{n-1}(\mathbf{s}^{(n-1)}_n).
\end{eqnarray}
is upper bounded by the upper bound on information gain from $N$ observations up to a constant independent of $N$. Combining these two lemmas, Theorem~\ref{The:RegUB} gives the upper bound on the regret of lcb policy. 

Our analysis requires that the observation set $\Sset$ is compact; in particular, $\Sset$ is a bounded subset of $\R^d$. Without loss of generality we assume
\begin{eqnarray}\label{eq:compact}
\Sset=[0,1]^d. 
\end{eqnarray}
We further have the following assumption on the smoothness of the kernel (the same as in~\cite{srinivas2010gaussian}).  
\begin{assumption}\label{Ass:smoothness}
For some constants $L,c_1,c_2>0$,
\begin{eqnarray}
\Pr\left[\sup_{\s\in D}|\frac{\partial g}{\partial s_j}|>L\right]\le c_1\exp\left(-\frac{L^2}{c_2^2}\right).
\end{eqnarray}
\end{assumption}

\begin{lemma}\label{Lemma:InstReg}
For some $\delta\in(0,1)$, let
$\beta_n = 2\log(\frac{2\pi^2n^2}{3\delta})+4d\log(dc_2n\sqrt{\log\frac{2dc_1}{\delta}})$. We have, for all $n\ge1$,
\begin{eqnarray}
f(\s_n^{(n-1)})-f(\s^*) \le 2\beta_n^{\frac{1}{2}}\sigma_{n-1}(\s_n^{(n-1)})+\frac{1}{n^2},
\end{eqnarray}
with probability at least $1-\frac{1}{\delta}$.
\end{lemma}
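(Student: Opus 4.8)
The plan is to follow the classical GP-UCB argument of \citet{srinivas2010gaussian}, but to carefully track the two distinct sources of optimism: the lower confidence bound for $f$ and the minimisation of $lcb$ over the cell $\Omega_{\s_n}$ containing the true (warped) optimum. First I would establish, via a union bound over the $n$ iterations, a confidence statement of the form $|f(\s)-\mu_{n-1}(\s)|\le\beta_n^{1/2}\sigma_{n-1}(\s)$ holding simultaneously at a fixed representative point of each cell at iteration $n$, with probability at least $1-\delta/(2\cdot\text{(something)})$. The choice $\beta_n = 2\log(2\pi^2n^2/(3\delta))$ absorbs the $\sum_n 1/n^2 = \pi^2/6$ factor in the standard way. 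This part is essentially identical to Lemma~5.1 of \citet{srinivas2010gaussian} once the candidate set is finite.

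The new ingredient is handling the \emph{continuous} optimum $\s^*\in[0,1]^d$, which need not coincide with any representative cell point. Here I would invoke Assumption~\ref{Ass:smoothness}: the high-probability Lipschitz-type bound on the partial derivatives of $g$ (and hence, by the argument in \citet{srinivas2010gaussian}, on sample paths of $f$) lets me discretise $\Sset=[0,1]^d$ at each iteration $n$ with a grid of spacing $\sim 1/(dc_2n\sqrt{\log(2dc_1/\delta)})$, so that the discretisation error in $f$ is at most $1/n^2$. The extra term $4d\log(dc_2n\sqrt{\log(2dc_1/\delta)})$ in $\beta_n$ is exactly the log-cardinality of this grid, needed for the union bound over grid points. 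Combining: with the stated probability, $f(\s_n^{(n-1)})-f(\s^*)$ is at most the gap evaluated through $lcb$ at the grid point $[\s^*]$ nearest $\s^*$, plus the $1/n^2$ discretisation slack; and since the algorithm picks $\s_n^{(n-1)}$ to minimise $lcb$ over its cell, which (by the one-to-one cell correspondence) contains a point arbitrarily close to $[\s^*]$, we get $lcb_{n-1}(\s_n^{(n-1)})\le \mu_{n-1}([\s^*])-\beta_n^{1/2}\sigma_{n-1}([\s^*]) \le f([\s^*]) \le f(\s^*)+1/n^2$. Then $f(\s_n^{(n-1)}) \le \mu_{n-1}(\s_n^{(n-1)})+\beta_n^{1/2}\sigma_{n-1}(\s_n^{(n-1)}) \le lcb_{n-1}(\s_n^{(n-1)}) + 2\beta_n^{1/2}\sigma_{n-1}(\s_n^{(n-1)})$, and chaining the two displays yields the claim.

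The main obstacle I anticipate is the interaction between the time-varying warping and the optimistic cell selection: $\mu_{n-1}$ and $\sigma_{n-1}$ are built from the observation locations $\{\s_j^{(n-1)}\}$ under the \emph{current} warping, and I must make sure the confidence bound and the discretisation are stated with respect to this warping at iteration $n$, not a fixed one. Since the statement only asserts the per-iteration inequality (the accumulation over $n$ is deferred to Lemma~\ref{Lemma:VnUB} and Theorem~\ref{The:RegUB}), it suffices that, conditionally on all iteration-$(n-1)$ quantities, the Gaussian posterior $f(\s)\mid \mathcal{F}_{n-1}\sim\nlaw(\mu_{n-1}(\s),\sigma_{n-1}^2(\s))$ is valid pointwise — which holds because the latent GP is stationary and the posterior is exact given the warped inputs. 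One should also double-check the probability bookkeeping: the displayed ``$1-\frac{1}{\delta}$'' is a typo for $1-\delta$ (for $\delta\in(0,1)$, $1-1/\delta<0$), and the two union bounds — over iterations and over the grid at each iteration — must share the total budget $\delta$, which the two logarithmic terms in $\beta_n$ are precisely sized to cover.
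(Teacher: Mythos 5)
Your proposal takes essentially the same route as the paper's Appendix~A.1: a pointwise Gaussian tail bound union-bounded over iterations and over an iteration-dependent discretisation of $\Sset=[0,1]^d$, whose resolution is set via the Lipschitz-type consequence of Assumption~\ref{Ass:smoothness} so that the error at the point nearest $\s^*$ is at most $1/n^2$, followed by chaining the optimistic acquisition inequality with the two confidence bounds exactly as in \eqref{la0}--\eqref{la1}. The only (inconsequential) bookkeeping slips are that the paper uses $K_n=dc_2n^2\sqrt{\log\frac{2dc_1}{\delta}}$ points per coordinate (your stated spacing with $n$ rather than $n^2$ would only give error $1/n$) and that the $4d\log(\cdot)$ term in $\beta_n$ bounds \emph{twice} the grid's log-cardinality rather than the log-cardinality itself; you are also right that the displayed ``$1-\frac{1}{\delta}$'' should read $1-\delta$.
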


\emph{Proof.} See Appendix~A.1.

The cumulative variance $V_N$ of the observed points in all iterations is an indicator of the total reduced uncertainty in the value of the function after $N$ observations and is a key parameter in characterizing the regret of lcb. 
Variation of observations points $\s_j^{(n)}$ over iterations however makes it difficult to bound $V_N$. To account for this variation, the following condition is imposed through the warping step 
\begin{eqnarray}\label{eq:vary}
||\s_j^{(n-1)}-\s_j^{(n)}||_2 \le \Delta_n,
\end{eqnarray}
for $1<n\le N$ where $\Delta_n\le \frac{Cd}{n}$ for some constant $C$ independent of $n$ and $d$. 
In practice, Eq. \ref{eq:vary} is enforced by constraining each component of $s_j^{n}$ to be close to those of $s_j^{n-1}$ when re-optimising the ELBO.

\begin{lemma}\label{Lemma:VnUB}
The cumulative variance $V_N$ defined in~\eqref{eq:Vn} is upper bounded as follows
\begin{eqnarray}\label{eq:VnUB}
V_N \le C_1 \gamma_N + C_2\log(N+1),
\end{eqnarray}
where $C_1$ and $C_2$ are constants independent of $N$ (given in appendix A) and $\gamma_n$ is an upper bound on information gain $I_N$ as defined in~\eqref{eq:Infogain}.
\end{lemma}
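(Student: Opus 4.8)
The plan is to adapt the information-theoretic bound of \citet{srinivas2010gaussian}, which relates a sum of posterior variances to the information gain, and to absorb the iteration-to-iteration movement of the observation points into a separate, summable error term. First I would use that the stationary kernel has unit variance, so $\sigma^2_{n-1}(\s)\le k(\s,\s)=1$ for every $\s$, and that $u\mapsto u/\log(1+u)$ is non-decreasing on $(0,\sigma^{-2}]$; this gives $\sigma^2_{n-1}(\s)\le \tfrac{\sigma^2}{\log(1+\sigma^{-2})}\log(1+\sigma^{-2}\sigma^2_{n-1}(\s))$, so that after summing at $\s=\s_n^{(n-1)}$,
\[
V_N\ \le\ \frac{2\sigma^2}{\log(1+\sigma^{-2})}\cdot\frac12\sum_{n=1}^N\log\!\big(1+\sigma^{-2}\sigma^2_{n-1}(\s_n^{(n-1)})\big).
\]

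Next I would recognise each summand as a one-step information gain and telescope it through a ``running'' information gain. Write $G_m:=\{\s_1^{(m)},\dots,\s_m^{(m)}\}$ for the configuration of the first $m$ observations after the warping has been updated with $m$ points, and $G_n':=\{\s_1^{(n-1)},\dots,\s_{n-1}^{(n-1)},\s_n^{(n-1)}\}$ for the configuration right after the $n$-th point is picked but \emph{before} the warping update. Since $G_{n-1}\subset G_n'$ live in the same (step-$(n-1)$) warping, the chain rule for the Gaussian information gain \eqref{eq:Infogain} gives $\tfrac12\log(1+\sigma^{-2}\sigma^2_{n-1}(\s_n^{(n-1)}))=I(G_n')-I(G_{n-1})$, where $I(G)=\tfrac12\log|\mathbb{I}+\sigma^{-2}\K_G|$ for a finite configuration $G$. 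Setting $\eta_n:=I(G_n')-I(G_n)$ and telescoping with $I(G_0)=0$,
\[
\frac12\sum_{n=1}^N\log\!\big(1+\sigma^{-2}\sigma^2_{n-1}(\s_n^{(n-1)})\big)\ =\ I(G_N)+\sum_{n=1}^N\eta_n\ \le\ \gamma_N+\sum_{n=1}^N|\eta_n|,
\]
where I used that $\gamma_N$ upper bounds the information gain of any $N$-point configuration in $\Sset=[0,1]^d$, in particular of $G_N$.

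It then remains to control $\sum_n|\eta_n|$, the total change in information gain caused by re-optimising the warping. Here I would exploit that $G_n'$ and $G_n$ have the same cardinality and, by the warping constraint \eqref{eq:vary}, corresponding points at distance at most $\Delta_n$. Using $\log\det M-\log\det M'=\int_0^1\operatorname{tr}[(M'+t(M-M'))^{-1}(M-M')]\diff t$ with $M=\mathbb{I}+\sigma^{-2}\K_{G_n'}$, $M'=\mathbb{I}+\sigma^{-2}\K_{G_n}$, both $\succeq\mathbb{I}$, one gets $|\eta_n|\le\tfrac12\sigma^{-2}\|\K_{G_n'}-\K_{G_n}\|_*$; the entries of $\K_{G_n'}-\K_{G_n}$ are controlled by a constant multiple of $\Delta_n$ via the Lipschitz continuity of the stationary kernel implied by Assumption~\ref{Ass:smoothness}. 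Pushed through carefully this should give $|\eta_n|\le C'\Delta_n$ for a constant $C'$ independent of $n$, and then $\Delta_n\le Cd/n$ yields $\sum_{n=1}^N|\eta_n|\le C'Cd\sum_{n=1}^N n^{-1}\le C'Cd\log(N+1)$, which is the source of the $C_2\log(N+1)$ term. Combining the displays gives $V_N\le C_1\gamma_N+C_2\log(N+1)$ with $C_1=\tfrac{2\sigma^2}{\log(1+\sigma^{-2})}$ and $C_2=C_1C'Cd$.

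The hard part will be this last step: turning the coordinate-wise movement bound \eqref{eq:vary} into a bound on $\|\K_{G_n'}-\K_{G_n}\|_*$ (equivalently on $|\eta_n|$) with a constant that does not degrade as the number of points grows, so that the per-iteration perturbation is genuinely of order $\Delta_n$ and the sum over iterations only contributes $O(\log N)$ rather than $O(N)$. This is exactly where the prescribed decay $\Delta_n=O(1/n)$ imposed by the warping step is used, and it is the main new ingredient relative to the static analysis of \citet{srinivas2010gaussian}.
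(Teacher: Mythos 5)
Your proposal follows the paper's route almost step for step: the same variance--to--log-increment inequality, the same chain-rule identity identifying $\tfrac12\log\bigl(1+\sigma^{-2}\sigma^2_{n-1}(\s_n^{(n-1)})\bigr)$ with $I(G_n')-I(G_{n-1})$, the same decomposition into a telescoping part (bounded by $\gamma_N$) plus per-iteration corrections $\eta_n=I(G_n')-I(G_n)$ caused by the warping update, and the same use of $\Delta_n\le Cd/n$ with a harmonic sum to produce the $\log(N+1)$ term. One small slip on the way: monotonicity of $u\mapsto u/\log(1+u)$ gives $\sigma^2_{n-1}(\s)\le\tfrac{1}{\log(1+\sigma^{-2})}\log\bigl(1+\sigma^{-2}\sigma^2_{n-1}(\s)\bigr)$, not the version with the extra factor $\sigma^2$ (your pointwise inequality fails at $\sigma^2_{n-1}(\s)=1$); accordingly $C_1=\tfrac{2}{\log(1+\sigma^{-2})}$, as in the paper.

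The genuine gap is exactly the step you flag as hard, and the tools you propose cannot close it. From $|\eta_n|\le\tfrac12\sigma^{-2}\|\K_{G_n'}-\K_{G_n}\|_*$ together with entrywise Lipschitz control $\bigl|[\K_{G_n'}-\K_{G_n}]_{ij}\bigr|\le L\Delta_n$ you can only conclude $\|\K_{G_n'}-\K_{G_n}\|_*\le\sqrt{n}\,\|\K_{G_n'}-\K_{G_n}\|_F\le L\,n^{3/2}\Delta_n$, and even the sharper feature-space estimate (writing $\K=\Phi^\top\Phi$ and using $\|\K_{G_n'}-\K_{G_n}\|_*\le 2\|\Phi'-\Phi\|_F\|\Phi\|_F\lesssim n\Delta_n$) still leaves $|\eta_n|=O(n\Delta_n)=O(d)$ per iteration, hence $\sum_n|\eta_n|=O(N)$ rather than $O(\log N)$. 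So the claimed ``$|\eta_n|\le C'\Delta_n$ with $C'$ independent of $n$'' is not delivered by any first-order (linear-in-$\Delta_n$) perturbation bound, and the $C_2\log(N+1)$ term is not established. The paper avoids this by a multiplicative argument: it asserts that the entrywise ratio of $\mathbb{I}+\sigma^{-2}\bar{\K}^{(n-1)}$ to $\mathbb{I}+\sigma^{-2}\K^{(n)}$ is at most $\exp(\Delta_n^2)$ and deduces $|\mathbb{I}+\sigma^{-2}\bar{\K}^{(n-1)}|\le\exp(n\Delta_n^2)\,|\mathbb{I}+\sigma^{-2}\K^{(n)}|$, hence $\eta_n\le\tfrac{n\Delta_n^2}{2}\le\tfrac{Cd}{2n}$; the crucial point is that the per-step change is \emph{quadratic} in $\Delta_n$, so the unavoidable factor of $n$ is absorbed by $\Delta_n^2=O(1/n^2)$ and the sum over $n$ is harmonic. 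To repair your version you would need a perturbation estimate of the same second-order type (exploiting cancellation so that $I(G_n')-I(G_n)$ scales like $n\Delta_n^2$), not one linear in $\Delta_n$.
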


\emph{Proof.} See Appendix~A.2.

See~\cite{srinivas2010gaussian} for the detail on the value of $\gamma_N$ for several kernels. For example they show a $\gamma_N=O(d\log N)$ and a $\gamma_N=(O(\log N)^{d+1})$ for finite spectrum and Squared Exponential kernels, respectively.

The following regret upper bound follows from the results established in Lemmas~\ref{Lemma:InstReg} and~\ref{Lemma:VnUB}.

\begin{theorem}\label{The:RegUB}
The regret of lcb over the compact set specified in~\eqref{eq:compact} under Assumption~\ref{Ass:smoothness} satisfies,
\begin{eqnarray}\nn
R(N,f,lcb) \le \sqrt{4N\beta_N(C_1\gamma_N+C_2\log(N+1))} +C_3,
\end{eqnarray}
where $C_1$, $C_2$ are the constants in Lemma~\ref{Lemma:VnUB} and $C_3$ is specified in the proof.
\end{theorem}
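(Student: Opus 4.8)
The plan is to combine Lemma~\ref{Lemma:InstReg} and Lemma~\ref{Lemma:VnUB} in the same way that \citet{srinivas2010gaussian} combine their instantaneous-regret and information-gain bounds, but keeping careful track of the additive $1/n^2$ terms that appear because our confidence bound holds only up to accuracy $1/n^2$. First I would condition on the event of Lemma~\ref{Lemma:InstReg}, which holds with probability at least $1-\delta$ (I will treat the stated $1-1/\delta$ as a typo for $1-\delta$). On that event, for every $n\ge 1$,
\begin{equation*}
f(\s_n^{(n-1)})-f(\s^*) \le 2\beta_n^{1/2}\sigma_{n-1}(\s_n^{(n-1)}) + \frac{1}{n^2}.
\end{equation*}
Summing over $n=1,\dots,N$ and using that $\beta_n$ is nondecreasing in $n$, so $\beta_n\le\beta_N$, gives
\begin{equation*}
R(N,f,lcb) \le 2\beta_N^{1/2}\sum_{n=1}^N \sigma_{n-1}(\s_n^{(n-1)}) + \sum_{n=1}^N \frac{1}{n^2}.
\end{equation*}
The second sum is bounded by $\pi^2/6$, which will be absorbed into the constant $C_3$.

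Next I would control the first sum. By the Cauchy--Schwarz inequality,
\begin{equation*}
\sum_{n=1}^N \sigma_{n-1}(\s_n^{(n-1)}) \le \sqrt{N \sum_{n=1}^N \sigma_{n-1}^2(\s_n^{(n-1)})} = \sqrt{N\, V_N},
\end{equation*}
where $V_N$ is exactly the quantity defined in~\eqref{eq:Vn}. Now I invoke Lemma~\ref{Lemma:VnUB} to replace $V_N$ by $C_1\gamma_N + C_2\log(N+1)$, yielding
\begin{equation*}
\sum_{n=1}^N \sigma_{n-1}(\s_n^{(n-1)}) \le \sqrt{N\,(C_1\gamma_N + C_2\log(N+1))}.
\end{equation*}
Substituting back,
\begin{equation*}
R(N,f,lcb) \le 2\beta_N^{1/2}\sqrt{N(C_1\gamma_N + C_2\log(N+1))} + \frac{\pi^2}{6} = \sqrt{4N\beta_N(C_1\gamma_N + C_2\log(N+1))} + C_3,
\end{equation*}
with $C_3 = \pi^2/6$, which is the claimed bound.

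The routine parts here are the two inequalities (monotonicity of $\beta_n$ and Cauchy--Schwarz) and the bookkeeping of the constant $C_3$. The genuine work has already been done in the two lemmas: the main obstacle in the overall argument is Lemma~\ref{Lemma:VnUB}, i.e. bounding $V_N$ by the static information gain $\gamma_N$ even though the observation locations $\s_j^{(n)}$ drift across iterations. That is where the hypothesis $\|\s_j^{(n-1)}-\s_j^{(n)}\|_2\le\Delta_n$ with $\Delta_n\le Cd/n$ enters: one must show that the posterior variance $\sigma_{n-1}^2(\s_n^{(n-1)})$ evaluated at the drifting points differs from the variance one would get with a fixed point set by an amount whose cumulative contribution is only $O(\log N)$, using Lipschitz continuity of the kernel (Assumption~\ref{Ass:smoothness}) and the summability of $\Delta_n^2$ (or of $\Delta_n$ against the decaying variances). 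For the purposes of proving Theorem~\ref{The:RegUB} itself, however, we may take Lemmas~\ref{Lemma:InstReg} and~\ref{Lemma:VnUB} as given, and the proof is the short chain of inequalities above.
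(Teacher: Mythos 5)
Your proposal is correct and follows essentially the same argument as the paper: sum the instantaneous-regret bound of Lemma~\ref{Lemma:InstReg}, apply Cauchy--Schwarz together with the monotonicity of $\beta_n$ (you merely swap the order of these two trivial steps), bound $\sum_{n\ge1} n^{-2}$ by $\pi^2/6$, and invoke Lemma~\ref{Lemma:VnUB} to replace $V_N$, arriving at the same bound with $C_3=\pi^2/6$. No substantive difference from the paper's proof.
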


\begin{proof}[Proof of Theorem~\ref{The:RegUB}]
We can write the cumulative regret as the sum of the instantaneous regrets and use Lemmas~\ref{Lemma:InstReg} and~\ref{Lemma:VnUB} to obtain
{\small{\begin{eqnarray}\nn
R(N,f,lcb) 
&=&\sum_{n=1}^Nf(\s_n^{(n-1)})-f(\s^*)\\\label{st1}
&\le& \sum_{n=1}^N\left(2 \beta_n^{\frac{1}{2}}\sigma_{n-1}(\s_n^{(n-1)}) +\frac{1}{n^2}\right)\\\label{st2}
&\le& \sqrt{4N\sum_{n=1}^N \beta_n\sigma^2_{n-1}(\s_n^{(n-1)})}+\sum_{n=1}^{N}\frac{1}{n^2}\\\label{st3}
&\le& \sqrt{4N\beta_N\sum_{n=1}^N \sigma^2_{n-1}(\s_n^{(n-1)})} +\frac{\pi^2}{6}~~~~~\\\nn
&\le& \sqrt{4N\beta_NV_N} +\frac{\pi^2}{6}\\\label{st4}
&\le& \sqrt{4N\beta_N(C_1 \gamma_N + C_2\log(N+1))} +\frac{\pi^2}{6}.~~~~~~~
\end{eqnarray}}}
Inequality~\eqref{st1} comes from Lemma~\ref{Lemma:InstReg},~\eqref{st2} is a result of Cauchy-Schwarz inequality,~\eqref{st3} is obtained by the fact that $\beta_n$ is increasing in $n$ and~\eqref{st4} is a direct application of Lemma~\ref{Lemma:VnUB}. The theorem holds with $C_3=\frac{\pi^2}{6}$.

\end{proof}

\section{Experiments}
As a proof of concept, we consider a set of toy problems: a 1D function (depicted in Figure \ref{fig:warping_explained}), three 2D functions, 
one with many discontinuities and two from a classical optimisation benchmarks \citep{hansen2016coco}, namely "bent cigar" and "different power" 
(depicted in Figure \ref{fig:multistep}) and a classical 4D function, "Hartman" \citep{dixon1978global}.
The 1D function has a critical discontinuity at optimum, the "bent cigar" and "different power" functions 
are unimodal but very challenging, due to a high conditioning and a very narrow optimum region. 
The other 2D function is multimodal, contains many discontinuities in the optimal region and has high conditioning.
The 4D function serves as a reference, as vanilla BO is known to perform well on it.

We compare our two algorithms, TS and UCB to a vanilla BO based on expected improvement. 
For both TS and UCB, we use the tree-search partition; UCB is run with a fixed $\beta_t$ of $3.0$.
For all methods, an initial set of five experiments is generated by space-filling design,
followed by 20 iterations.
Each strategy is run 10 times with different initial conditions. 
Performance is measured in terms of cumulative regret and reported in Figure \ref{fig:regret}.

All methods are implemented using gpflow \citep{matthews2017gpflow}, and all GPs 
use the same Matern3/2 kernel. For the ordinal approach,
the optimisation problem \ref{eq:elbo} is solved using Adam \citep{kingma2014adam}.

We can see that on the challenging functions, while vanilla BO struggles to optimise the function, 
both of our algorithms significantly outperform BO, in particular during the first steps, 
despite the difficulty of the problem. Here, LCB performed better than TS.
On the classical Hartman function, our approach is only marginally outperformed by standard BO on the classical problem.

Figure \ref{fig:1dexample} shows the state of single run of our approach with LCB (after 8 acquisition points added to an initial set of 5 points).
We see that the latent model accounts for the discontinuity by setting a large distance between the points just before and just after it. As a result, our algorithm is capable of exploring the discontinuity region, while a vanilla BO algorithm would have rejected it rapidly.
One may also notice that a large region (left) is considerably reduced in the latent space. Although this appears as beneficial on this run,
it also indicates that our algorithm might be less global than vanilla BO. 
Such issue could be addressed by mixing our approach with metric-based strategies, for instance by sampling from time to time inside the largest cell in the original space.

\begin{figure}[ht]
 \centering
   \includegraphics[trim=40mm 10mm 270mm 15mm, clip, width=.9\linewidth]{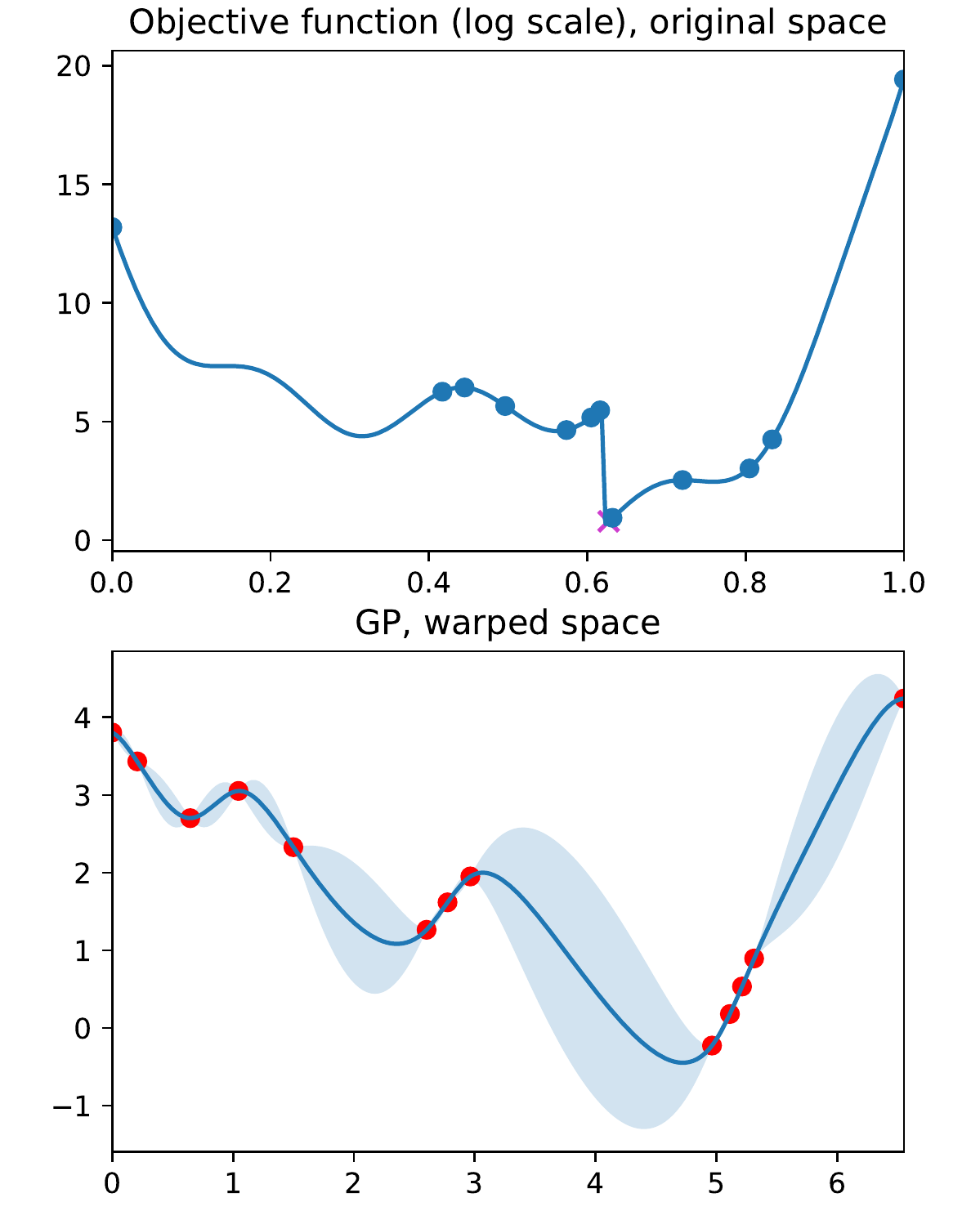}
   \caption{Example run of LCB on the 1D function.}\label{fig:1dexample}
 \end{figure}

Figure \ref{fig:2dexample} shows a single run of our approach with LCB (with 30 acquisition points).
One can observe that most observations form clusters around local and global optima,
while some regions are largely ignored.

\begin{figure}[ht]
 \centering
  \includegraphics[trim=0mm 0mm 0mm 9mm, clip, width=\linewidth]{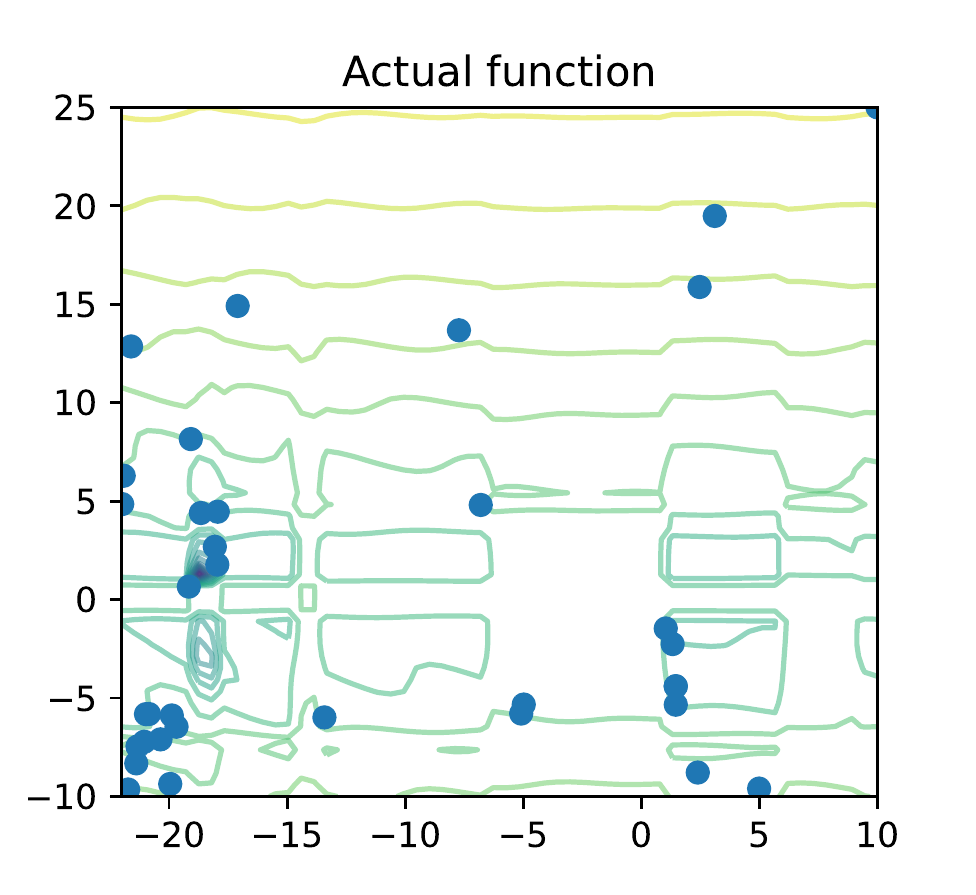}
  \caption{Example run of LCB on the "many steps" function.}\label{fig:2dexample}
\end{figure}

\section{Concluding comments}
In this work, we proposed a novel BO approach that does not consider the values of the inputs or outputs, but only their respective ordering. Our algorithm is based on a Variational GP model and a set of ordinal warpings. We showed how such model could be used to refine either strategies of LCB or Thompson sampling. We proved an upper bound on the regret of confidence bound based approached in the warped space, and demonstrated the capability of our algorithm on a challenging toy problem. 

Future work may include the analysis of Thompson sampling and a more comprehensive experimental comparison to the existing approaches over a wider range of ill-conditioned objective functions. 

\section*{Appendix A.1: Proof of Lemma~1}

From assumption~1, for all $\s\in\Sset$, with probability at least $1-dc_1\exp\left(-\frac{L^2}{c_2^2}\right)$,
\begin{eqnarray}
|f(\s)-f(\s')|\le L||\s-\s'||_1.
\end{eqnarray}
Let $\delta = 2dc_1\exp\left(-\frac{L^2}{c_2^2}\right)$. We have that with probability greater than $1-\frac{\delta}{2}$,
\begin{eqnarray}\label{sta1}
|f(\s)-f(\s')|\le c_2\sqrt{\log\frac{2dc_1}{\delta}}||\s-\s'||_1.
\end{eqnarray}

Assumption $\Sset\subseteq[0,1]^d$ indicates that $0\le s_j\le 1$ for all $\s\in \Sset$ and $1\le j\le d$. Let $\Sset_n$ be a discretisation of $\Sset$ with size $|\Sset_n|=K_n^d$ (with $K_n\in\Nset$) at iteration $n$ such that, for all $\s\in \mathcal{S}$ 
\begin{eqnarray}\label{eq:closeness}
||\s-c_{\Sset_n}(\s)||\le \frac{ud}{K_n},
\end{eqnarray}
where $c_{\Sset_n}(\s) = \min_{\s'\in\Sset_n}||\s'-\s||$ is the closest point to $\s$ in $\Sset_n$. 
This discretisation is possible by uniformly spreading $K$ points along each coordinate of $\Sset$.  

Choosing $K_n=dn^2c_2\sqrt{\log\frac{2dc_1}{\delta}}$, from~\eqref{sta1} and~\eqref{eq:closeness}, we have with probability at least $1-\frac{\delta}{2}$
\begin{eqnarray}\label{eq24}
|f(\s)-f(c_{S_n}(\s))|&\le& c_2\sqrt{\log\frac{2dc_1}{\delta}} \frac{d}{K_n}
=\frac{1}{n^2}.~~~~
\end{eqnarray}

$\frac{f(\s)-\mu_{n-1}(\s)}{\sigma_{n-1}(\s)}$ has a normal distribution. Thus
\begin{eqnarray}
\Pr[|f(\s) - \mu_{n-1}(\s)|> \beta_n^{\frac{1}{2}}\sigma_{n-1}(\s)]\le \exp(-\frac{\beta_n}{2}).
\end{eqnarray}

Replacing $\beta_n$ and using union bound over all $n$ and $\s\in\Sset_n$, we have with probability at least $1-\frac{\delta}{2}$, for all $\s\in\Sset_n$ and $n\ge1$
\begin{eqnarray}\label{eqbou1}
|f(\s) - \mu_{n-1}(\s)|\le \beta_n^{\frac{1}{2}}\sigma_{n-1}(\s).
\end{eqnarray}

Now we have all the material needed to prove the lemma. By definition of the acquisition rule:
\begin{eqnarray*}\nn
&&\hspace{-3em}\mu_{n-1}(\s^{(n-1)}_{n})-\beta_n^{\frac{1}{2}}\sigma_{n-1}(\s^{(n-1)}_n) \le\\
&&\mu_{n-1}(c_{{n-1}}(\s^*))-\beta_n^{\frac{1}{2}}\sigma_{n-1}(c_{{n-1}}(\s^*)).~~~~ 
\end{eqnarray*}

Using an union bound on~\eqref{eq24} and~\eqref{eqbou1}, we have with probability at least $1-\delta$
\begin{eqnarray*}
f(\s^*) \ge \mu_{n-1}(c_{n-1}(\s^*))  -\beta_n^{\frac{1}{2}}\sigma_{n-1}(c_{{n-1}}(\s^*))-\frac{1}{n^2},
\end{eqnarray*}
which shows with probability at least $1-\delta$
\begin{eqnarray}\label{la0}
f(\s^*) \ge \mu_{n-1}(\s^{(n-1)}_{n})-\beta_n^{\frac{1}{2}}\sigma_{n-1}(\s^{(n-1)}_n) -\frac{1}{n^2}.
\end{eqnarray}

Applying~\eqref{eqbou1} to $\s_{n}^{(n-1)}$ we have
\begin{eqnarray}\nn
&&\hspace{-1em}f(\s_{n}^{(n-1)}) - f(\s^*)\\\nn
&\le& \mu_{n-1}(\s_{n}^{(n-1)}) +\beta_n^{\frac{1}{2}}\sigma_{n-1}(\s_{n}^{(n-1)})-f(\s^*)\\\label{la1}
&\le&2\beta_n^{\frac{1}{2}}\sigma_{n-1}(\s_{n}^{(n-1)}) +\frac{1}{n^2},
\end{eqnarray}
where~\eqref{la1} is a result of~\eqref{la0}.

\section*{Appendix~A.2: Proof of Lemma~2}

The following equation is a direct result of Lemma 5.4 in~\cite{srinivas2010gaussian} on~$I_n$:
\begin{eqnarray}\nn
&&\hspace{-4em}I_n(\{s^{(n-1)}_j\}_{j=1}^n) - I_{n-1}(\{s^{(n-1)}_j\}_{j=1}^{n-1}) \\\nn &=&\frac{1}{2}\log(1+\frac{\sigma_{n-1}^2(s_n^{(n-1)})}{\sigma^2}).
\end{eqnarray}
 
By definition of $V_n$, we have
{\small{\begin{eqnarray}\nn
V_n - V_{n-1} 
&=& \sigma_{n-1}^2(s_n^{(n-1)}) 
= \sigma^2\frac{\sigma_{n-1}^2(s_n^{(n-1)})}{\sigma^2}
\end{eqnarray}
\begin{eqnarray}\label{os1}
&\le& \frac{1}{\log(1+\frac{1}{\sigma^2})} \log(1+ \frac{\sigma_{n-1}^2(s_n^{(n-1)})}{\sigma^2})\\\nn
&\le& \frac{2}{\log(1+\frac{1}{\sigma^2})}\bigg(I_n(\{s^{(n-1)}_j\}_{j=1}^n) 
- I_{n-1}(\{s^{(n-1)}_j\}_{j=1}^{n-1})  \bigg) \\\nn
&=& \frac{2}{\log(1+\frac{1}{\sigma^2})}\bigg(
I_n(\{s^{(n-1)}_j\}_{j=1}^n) -
I_{n}(\{s^{(n)}_j\}_{j=1}^{n})\\\nn
&&\hspace{1em}+ I_{n}(\{s^{(n)}_j\}_{j=1}^{n})
-I_{n-1}(\{s^{(n-1)}_j\}_{j=1}^{n-1}) 
\bigg)
\end{eqnarray}}}
Inequality~\eqref{os1} holds since $z^2\le \frac{1}{\log(1+\frac{1}{\sigma^2})}\log(1+\frac{z^2}{\sigma^2})$ for all $z\le 1$ and $\sigma_{n-1}^2(s_n^{(n-1)}) \le1$. 

Summing both sides over $n$ from $1$
 to $N$ we get 
{\small{\begin{eqnarray}\nn
V_N
&\le& \frac{2}{\log(1+\frac{1}{\sigma^2})}\bigg(\
\sum_{n=1}^N 
I_n(\{s^{(n-1)}_j\}_{j=1}^n) -
I_{n}(\{s^{(n)}_j\}_{j=1}^{n}) \bigg)\\\label{eq:regd}
&+& \frac{2}{\log(1+\frac{1}{\sigma^2})} I_N(\{s^{(N)}_j\}_{j=1}^{N}). 
\end{eqnarray} }}
\hspace{-1em}
The first term on the right hand side of~\eqref{eq:regd} captures the change in the information gain by the variation of the observation points. The second term is a scaled information gain.
 
Let $\bar{\K}^{(n-1)}$ and $\K^{(n)}$ denote the covariance matrices of $\{s^{(n-1)}_j\}_{j=1}^{n-1}$ and $\{s^{(n)}_j\}_{j=1}^{n}$, respectively.
Since the maximum replacement of $\s_j$ from iteration $n-1$ to iteration $n$ is $\Delta_n$, we have
\begin{eqnarray}
\max_{i,j}\frac{[I+\sigma^{-2}\bar{K}^{(n-1)}]_{i,j}}{[I+\sigma^{-2}{K}^{(n)}]_{i,j}} \le \exp(\Delta_n^2).
\end{eqnarray}

Thus, for the first term on the right hand side of~\eqref{eq:regd}, we have
{\small{\begin{eqnarray}\nn
&&\hspace{-2em}I_n(\{s^{(n-1)}_j\}_{j=1}^{n-1}) -
I_{n}(\{s^{(n)}_j\}_{j=1}^{n}) \\\nn
&=& \frac{1}{2}\log |I+\sigma^{-2}\bar{\K}^{(n-1)}| -\frac{1}{2}\log |I+\sigma^{-2}{\K}^{(n)}|\\\nn
&=& \frac{1}{2}\log\frac{|I+\sigma^{-2}\bar{\K}^{(n-1)}|}{|I+\sigma^{-2}{\K}^{(n)}|} \\\label{lae1}
&\le&\frac{1}{2}\log\exp(n\Delta_n^2)
=\frac{n\Delta_n^2}{2}
\le\frac{Cd}{2n}.
\end{eqnarray}}}
where the last inequality come from condition on $\Delta_n$.
Combining~\eqref{eq:regd} and~\eqref{lae1}, we get
{\small{\begin{eqnarray}\nn
V_N &\le& \frac{2}{\log(1+\frac{1}{\sigma^2})}\bigg( I_N(\{s^{(N)}_j\}_{j=1}^{N}) + \sum_{n=1}^N \frac{Cd}{2n}\bigg)\\\nn
&\le&
\frac{2}{\log(1+\frac{1}{\sigma^2})}\gamma_n
+\frac{Cd}{\log(1+\frac{1}{\sigma^2})}\log(n+1)\\
&\le& C_1\gamma_n + C_2\log(n+1),
\end{eqnarray}}}
where $C_1 = \frac{2}{\log(1+\frac{1}{\sigma^2})}$ and $C_2=\frac{Cd}{\log(1+\frac{1}{\sigma^2})}$.

\newpage
\onecolumn
\section*{Appendix A.3: Experiments test functions and regret curves}

\begin{figure*}[ht]
 \centering
   \includegraphics[trim=150mm 25mm 180mm 78mm, clip, width=.32\linewidth]{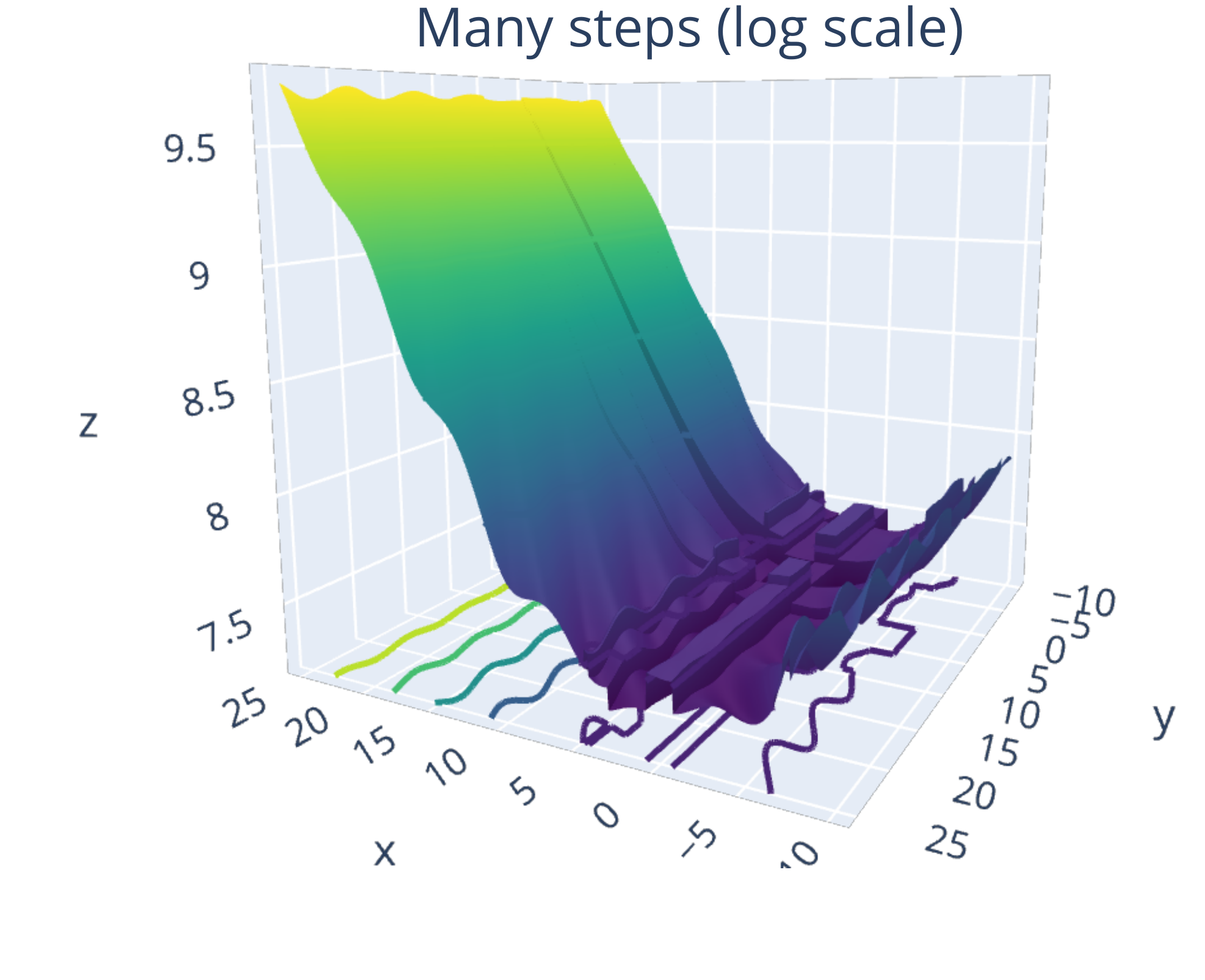}
   \includegraphics[trim=150mm 25mm 180mm 78mm, clip, width=.32\linewidth]{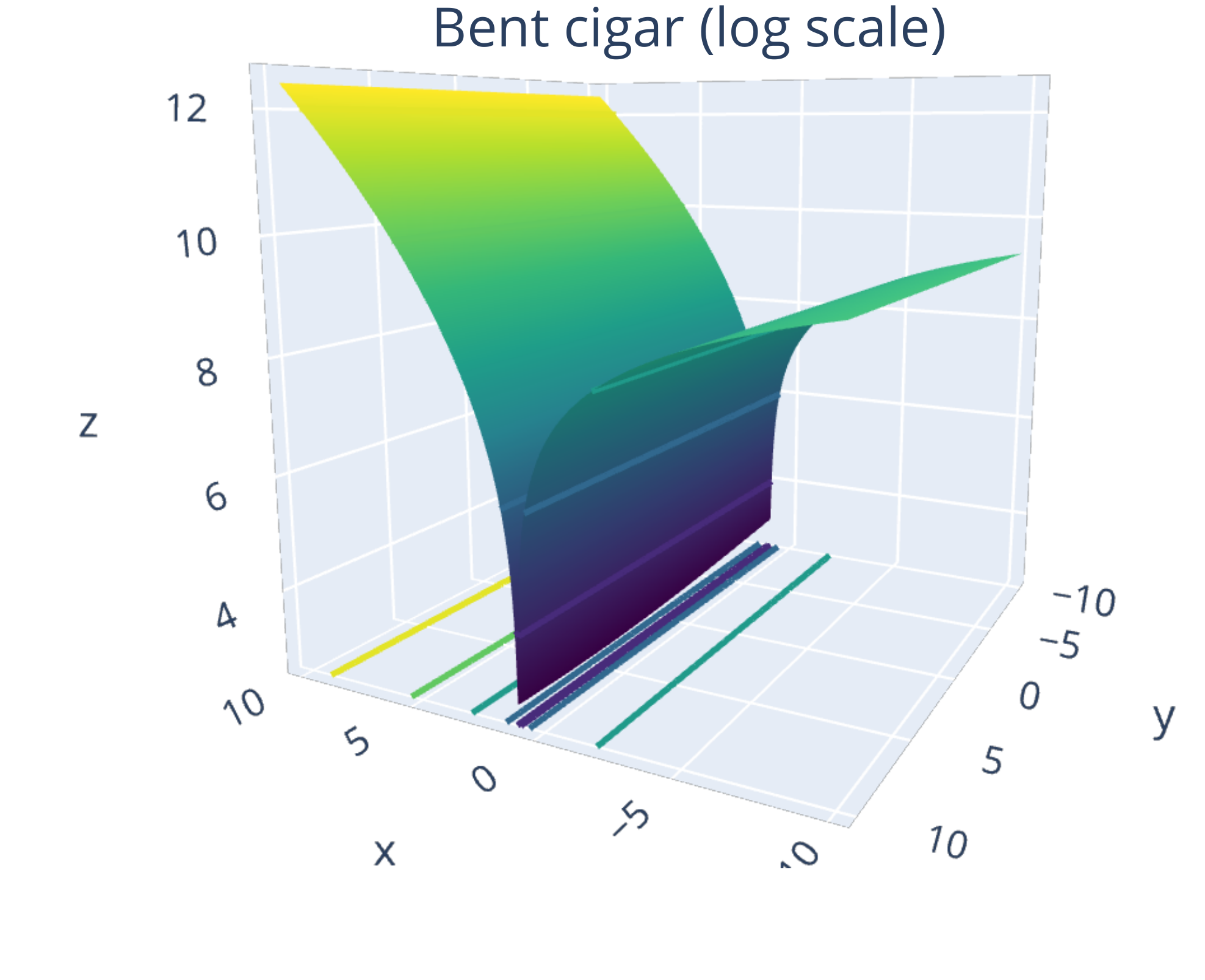}
  \includegraphics[trim=150mm 25mm 180mm 78mm, clip, width=.32\linewidth]{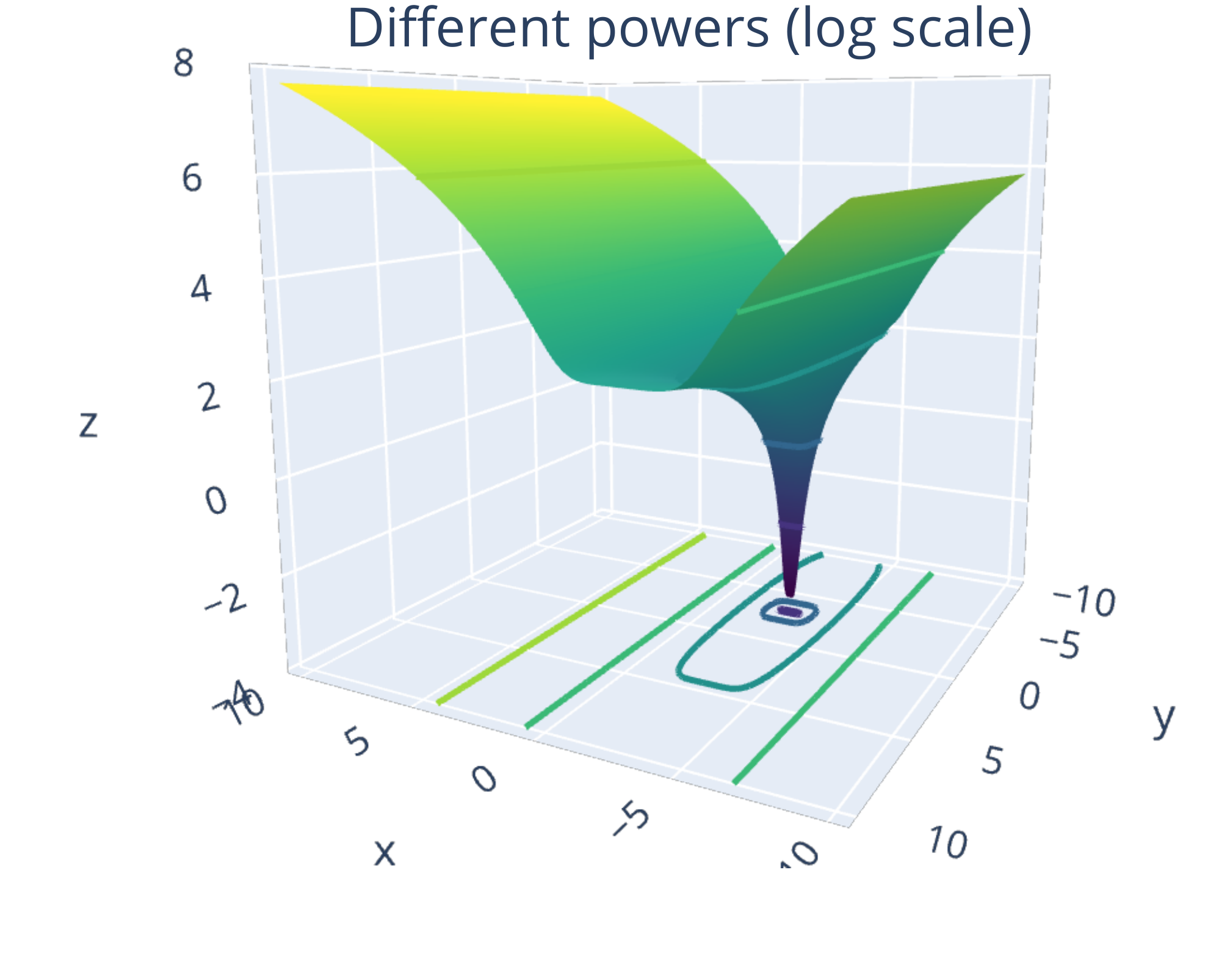}
   \caption{2D test functions.}\label{fig:multistep}
 \end{figure*}
 
 \begin{figure*}[ht]
  \includegraphics[trim=0mm 0mm 0mm 0mm, clip, width=.32\linewidth]{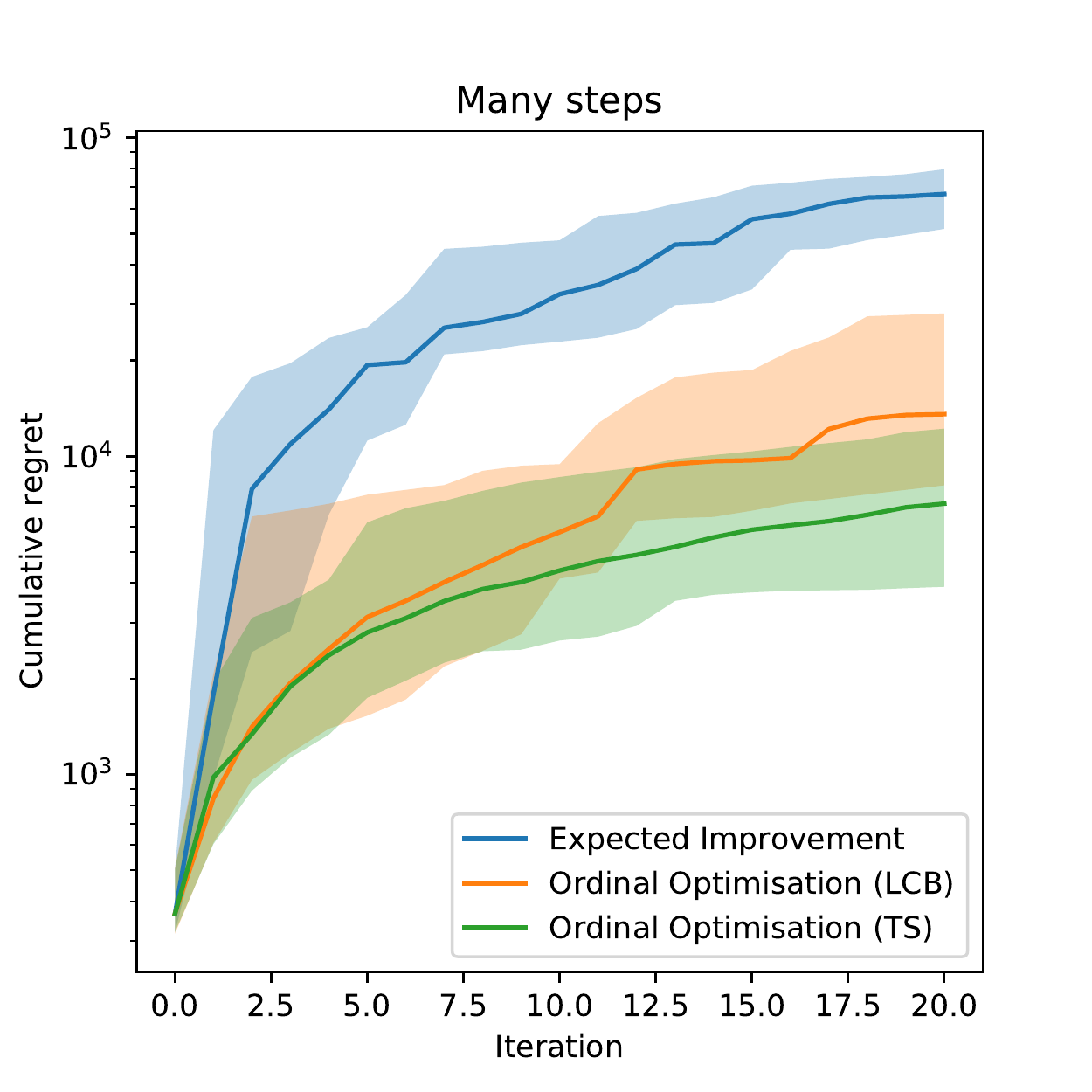}
\includegraphics[trim=0mm 0mm 0mm 0mm, clip, width=.32\linewidth]{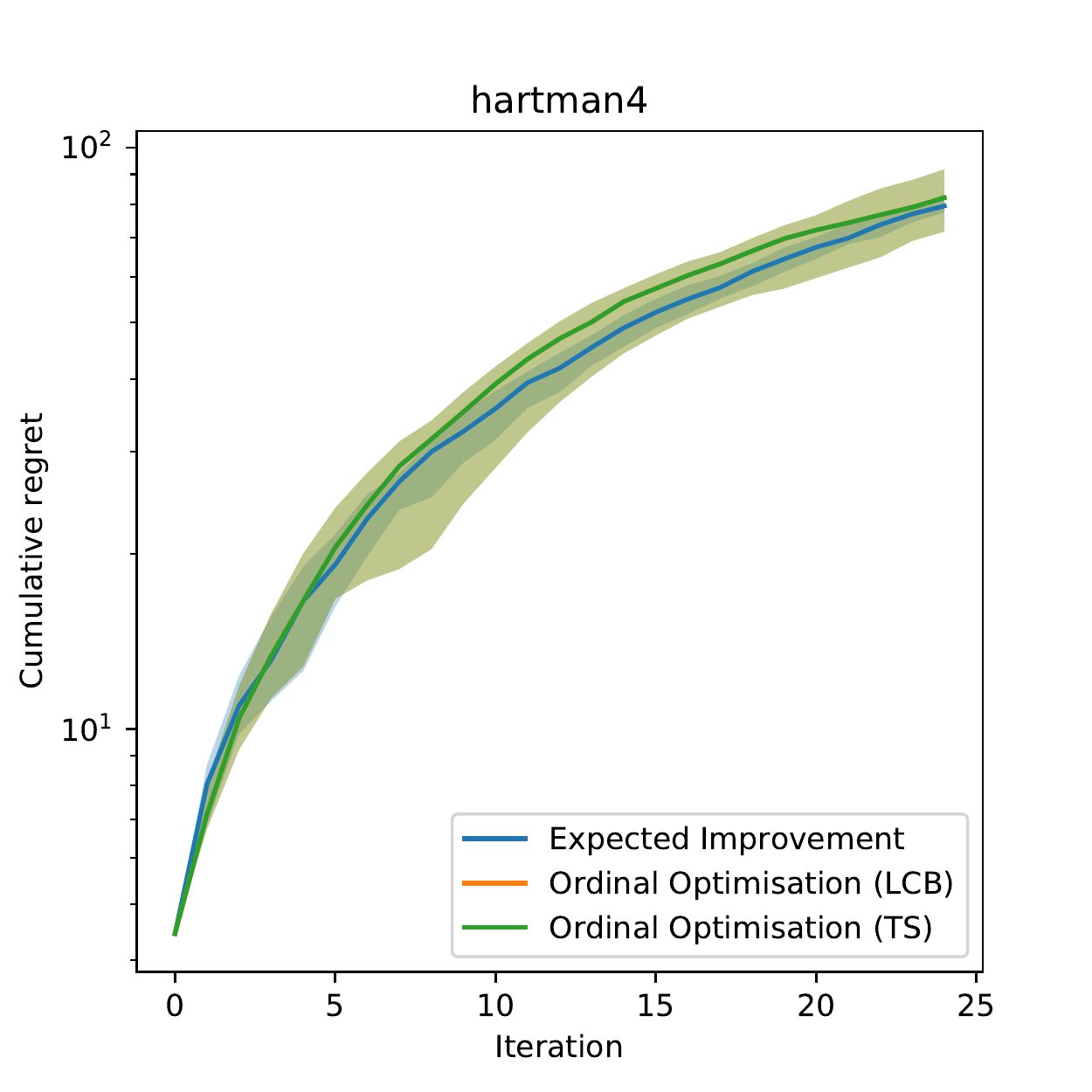}
  \includegraphics[trim=0mm 0mm 0mm 0mm, clip, width=.32\linewidth]{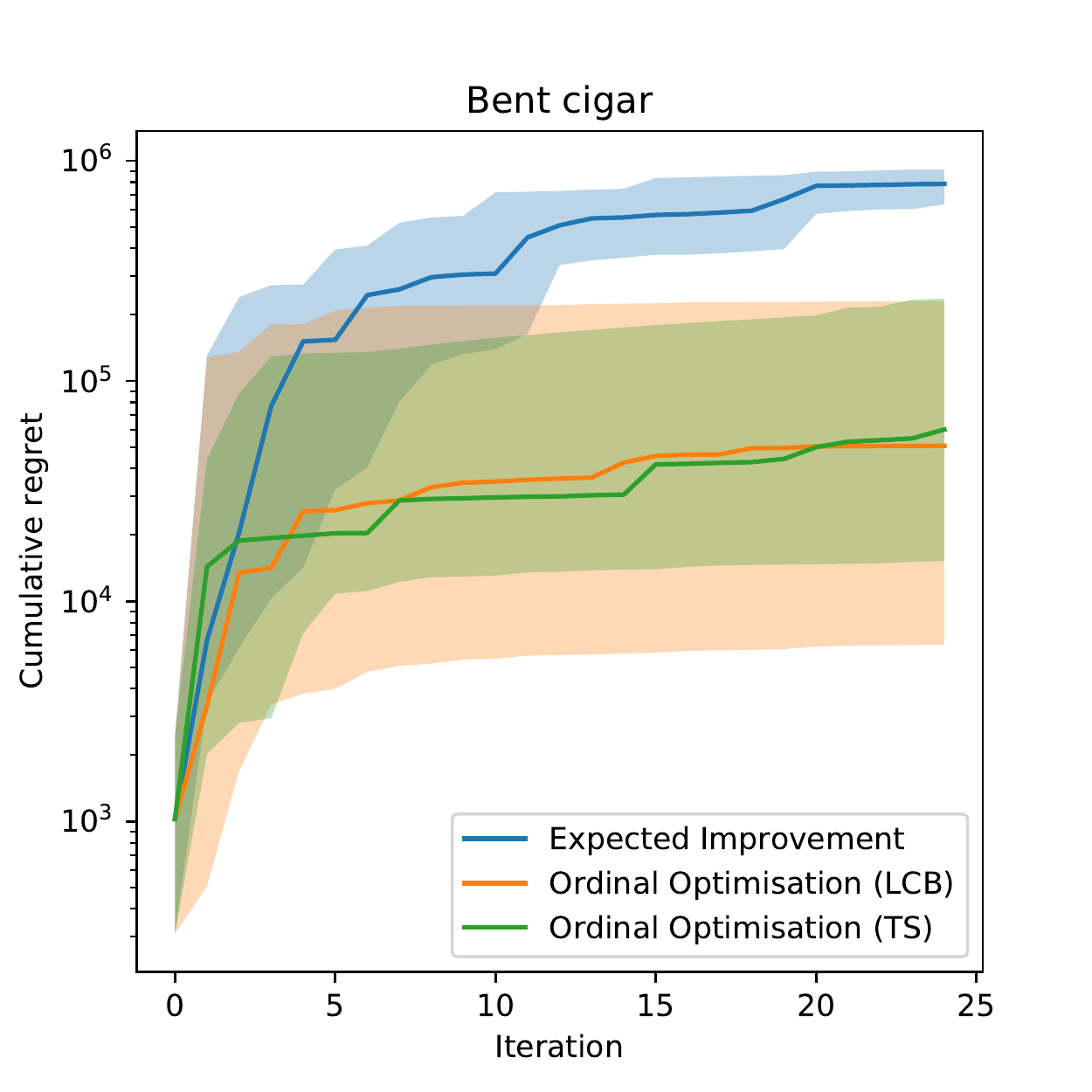}\\
  \includegraphics[trim=0mm 0mm 0mm 0mm, clip, width=.32\linewidth]{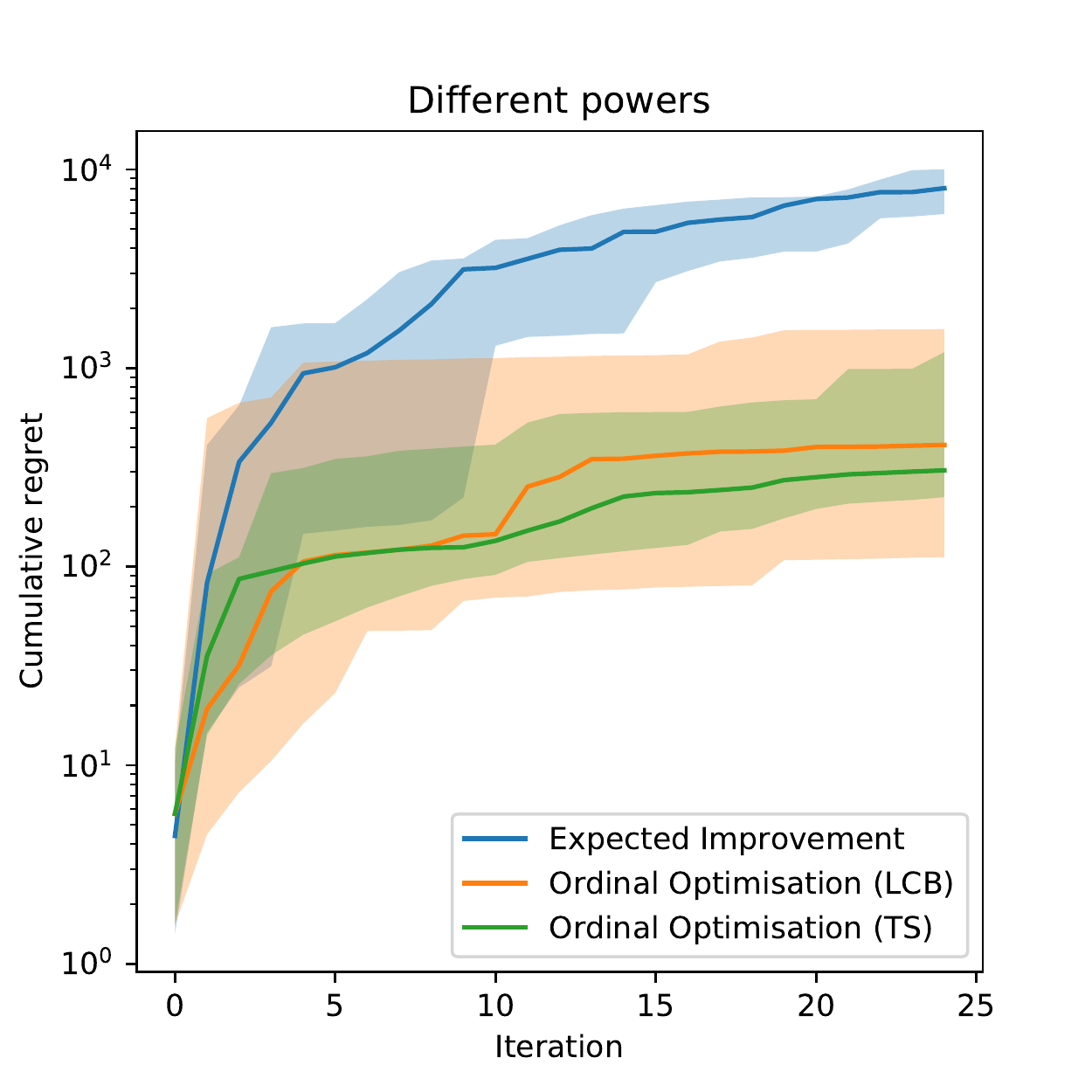}
  \includegraphics[trim=0mm 0mm 0mm 0mm, clip, width=.32\linewidth]{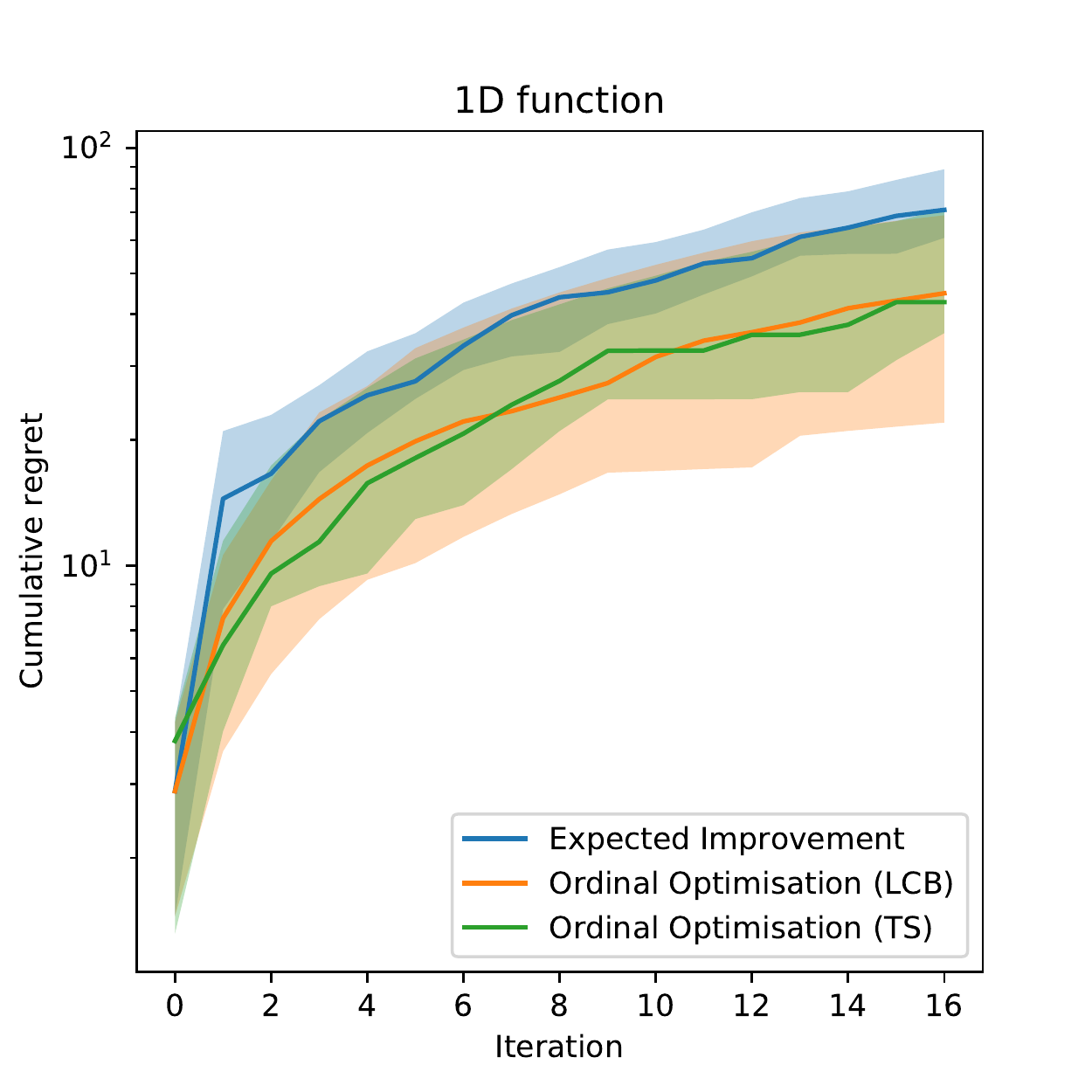}
   \caption{Cumulative regrets on the five test problems.}\label{fig:regret}
\end{figure*}

\twocolumn
\newpage

\end{document}